\documentclass{opt2025} % Include author names
% \documentclass[anon]{opt2025} % Author names withheld
% The following packages will be automatically loaded:
% amsmath, amssymb, natbib, graphicx, url, algorithm2e

\usepackage{graphicx} % Required for inserting images

\usepackage[utf8]{inputenc} % allow utf-8 input
\usepackage[T1]{fontenc}    % use 8-bit T1 fonts
\usepackage{url}            % simple URL typesetting
\usepackage{booktabs}       % professional-quality tables
\usepackage{amsfonts}       % blackboard math symbols
\usepackage{nicefrac}       % compact symbols for 1/2, etc.
\usepackage{microtype}      % microtypography
\usepackage{xcolor}         % colors
\usepackage{subcaption}
\usepackage{nicefrac}       % compact symbols for 1/2, etc.
\usepackage{microtype}      % microtypography
\usepackage{xcolor}         % colors
\usepackage{enumitem}
\usepackage{xspace}
\usepackage{wrapfig}

% more maths
\usepackage{mathtools}
\usepackage{todonotes}
\usepackage{amssymb,fge}
\usepackage{thm-restate}
\usepackage{wrapfig}
\usepackage{bbm}
\usepackage{mathrsfs}
\usepackage{soul}
\usepackage{array}
\usepackage{multirow}
\usepackage{algorithm}
\usepackage{cleveref}
\usepackage[commentColor=black,beginLComment=/*~, endLComment=~*/]{algpseudocodex}
\usepackage{pifont}% http://ctan.org/pkg/pifont
\usepackage{tikz-3dplot}
\usepackage{pgfplots}
\pgfplotsset{compat=newest}
\usepackage{tikzscale}

\usepackage{wasysym}
\usepackage{acronym}
\usepackage{xfrac}

%\newcommand{\myred}[1]{{\color{RedOrange}#1}} % RGB ‎255, 83, 73    #ff5349
%\newcommand{\mygreen}[1]{{\color{ForestGreen}#1}} % RGB 34,139,34
%\newcommand{\myblue}[1]{{\color{NavyBlue}#1}}

% algname

% norms, brackets and so on
\newcommand{\norm}[1]{{\left\| #1 \right\|}}

 % squared norm
 % squared norm (alternative)
% matrix norms

 % expectation of squared norm
 % expectation conditional of squared norm

%\newcommand{\lin}[1]{\left\langle #1\right\rangle} % inner product

 % inner product
%\newcommand{\ip}[1]{\left\langle  #1\right \rangle} %inner product

 % floor bracket
 % ceiling bracket
 % round bracket
 % square bracket
 % angle bracket
 % norm bracket

 % floor bracket
 % ceiling bracket
 % round bracket
 % square bracket
 % angle bracket
 % norm bracket
 % curly bracket
 % absolute bracket
\newcommand{\abs}[1]{\left\lvert #1\right\rvert} % absolute bracket
 % absolute bracket
 % absolute bracket

% xxxxxxxxx

% caligraphic

\newcommand{\cL}{\mathcal{L}}

% mathbb

\newcommand{\R}{\mathbb{R}}  % set of reals

% bold matrices
 \newcommand{\mA}{\mathbf{A}}

\newcommand{\mI}{\mathbf{I}}

% strange stuff

\newcommand{\del}[1]{}

% \newcommand{\st}{\;:\;} % such that
% %\newcommand{\eqdef}{\stackrel{\text{def}}{=}}

% sets
         % cardinality of a set
        % diameter of a set
               % volume of a set

% statistics
%\newcommand{\Exp}[1]{{\rm E}\left[#1\right]}

%\newcommand{\prox}{\mathrm{prox}}

\newcommand{\pr}[1][]{
  \ifthenelse { \equal{#1}{} }
  { \ensuremath{\mathrm{P}} }
  { \ensuremath{\mathrm{P}\left(#1\right)} }
}

\DeclareMathOperator{\Var}{Var}         % variance
       % correlation
\DeclareMathOperator{\Unif}{Unif} 
 % probability

%\newcommand{\Exp}[1]{{\color{red}\mathbb{E}}\left[#1\right]}
%\newcommand{\ExpD}[2]{{\color{red}\mathbb{E}}_{#1}\left[#2\right]} % expectation showing distribution 
%\newcommand{\ExpC}[2]{{\color{red}\mathbb{E}}\left[ \left. #1 \; \right| \; #2\right]} % expectation conditional

% functions and operators
     % signum/sign of a scalar
         % domain
         % epigraph
        % nullspace/kernel
  % nullpsace
     % range
        % image

% topology
    % interior
%\DeclareMathOperator{\ri}{rint}         % relative interior
       % relative interior
       % boundary
           % closure

% vectors, matrices

\DeclareMathOperator{\tr}{Tr}           % trace
       % rank
       % convex hull
       % Diag(v) = diagonal matrix with v_i on the diagonal
\DeclareMathOperator{\diag}{diag}       % diag(D) = the diagonal vector of matrix D
         % Argument
\DeclareMathOperator{\KL}{KL}

\DeclarePairedDelimiterX{\infdivx}[2]{(}{)}{%
  #1\;\delimsize\|\;#2%
}

%%%%%%% THEOREMS BOOK %%%%%%%%

\newtheorem{assumption}{Assumption}\numberwithin{assumption}{section}
\newtheorem{theorem}{Theorem}\numberwithin{theorem}{section}
\newtheorem{lemma}{Lemma}\numberwithin{lemma}{section}
\numberwithin{proposition}{section}
\newtheorem{corollary}{Corollary}\numberwithin{corollary}{section}
\numberwithin{claim}{section}
\numberwithin{fact}{section}
 \numberwithin{exercise}{section}
\numberwithin{example}{section}
\newtheorem{definition}{Definition}\numberwithin{definition}{section}

%%%%%%% THEOREMS ARTICLE %%%%%%%%

% TO DO NOTES 

\DeclareMathOperator*{\argmin}{arg\,min}

  % set of naturals

\title[Provable Benefit of Sign Descent]{Provable Benefit of Sign Descent: A Minimal Model Under Heavy-Tail Class Imbalance}

% Use \Name{Author Name} to specify the name.
% If the surname contains spaces, enclose the surname
% in braces, e.g. \Name{John {Smith Jones}} similarly
% if the name has a "von" part, e.g \Name{Jane {de Winter}}.
% If the first letter in the forenames is a diacritic
% enclose the diacritic in braces, e.g. \Name{{\'E}louise Smith}

% Authors with the same address:

\optauthor{%
\Name{Robin Yadav}${}^{1,3}$ \Email{robiny12@student.ubc.ca}\\
\Name{Shuo Xie}${}^{1}$ \Email{shuox@ttic.edu}\\
\Name{Tianhao Wang}${}^{1, 2}$ \Email{tianhaowang@ucsd.edu}\\
\Name{Zhiyuan Li}${}^{1}$ \Email{zhiyuanli@ttic.edu}\\
\addr ${}^1$Toyota Technological Institute at Chicago\\
${}^2$University of California, San Diego\\
${}^3$University of British Columbia
}

% Authors with different addresses:
%\optauthor{%
% \Name{Author Name1} \Email{abc@sample.com}\\
% \addr Address 1
% \AND
% \Name{Author Name2} \Email{xyz@sample.com}\\
% \addr Address 2%
%}

\begin{document}

\maketitle

\newcommand{\complexity}[2]{\mathcal{C}_{#2}\left(#1\right)}

\begin{abstract}
    Adaptive optimization methods (such as Adam) play a major role in LLM pretraining, significantly outperforming Gradient Descent (GD). Recent studies have proposed new smoothness assumptions on the loss function to explain the advantages of adaptive algorithms with structured preconditioners, e.g., coordinate-wise or layer-wise, and steepest descent methods w.r.t. non-euclidean norms, e.g., $\ell_\infty$ norm or spectral norm, over GD. However, it remains unclear how these smoothness assumptions manifest in language modelling tasks. In this work, we aim to analyze the benefit of $\ell_\infty$-norm descent (a.k.a. sign descent) directly from properties of the data distribution, namely, heavy-tailed class imbalance. We propose a minimal yet representative setting of next-token prediction, where we can provably show faster convergence of coordinate-wise algorithms such as Sign descent (steepest descent w.r.t. $\ell_\infty$ norm) over normalized GD (steepest descent w.r.t. to $\ell_2$ norm) in the presence of heavy tail class imbalance. 
\end{abstract}

%\begin{keywords}%
%  List of keywords%
%\end{keywords}

% \section*{Outline for intro}
% \begin{enumerate}
%     \item Adaptive preconditioners and non-euclidean descent methods play an importatn role in modern dl, especially the expansive LLM pretraining. They surpass the performance of conventional rotationally invariant SGD
%     \item There have been existing works studying new assumptions under which Adaptive preconditioners and non-euclidean descent methods can beat SGD. But this assumptions are directly about the hessian of loss. We do not know how such assumptions emerge from dataset and network architecture.
%     \item In this work we aim to make first steps towards giving end-to-end analysis for the benefit of non-euclidean descent methods, starting from the model and data distribution, instead of intermediate smoothness assumptions. 
%     \item We focus on the recent observation that adam beats sgd for heavy-tailed token distrbution. We come up with a minimal setup where we provably show faster convergence rate for coordinate-wise algorithm such as SignGD over GD.
%     \item 
% \end{enumerate}
\section{Introduction}

\looseness=-1
Adaptive coordinate-wise methods are the go-to class of optimizers for modern deep learning problems \citep{bernstein_old_2024}. In particular, the Adam optimizer \citep{kingma_adam_2017} and its variants \citep{loshchilov_decoupled_2019} are prevalent in LLM pretraining, where they significantly surpass the performance of conventional rotationally invariant SGD methods \citep{xie_adam_2025, kunstner_heavy-tailed_2024}. Despite this remarkable empirical success, we still lack a complete theoretical understanding of why Adam converges faster than SGD for language modelling tasks. 

\looseness=-1
Recently, a growing body of work has explored new assumptions under which adaptive coordinate-wise algorithms and non-euclidean steepest descent methods achieve faster convergence than SGD \citep{xie_adam_2025, liu_adagrad_2024, jiang_provable_2025, bernstein_signsgd_2018}. Specifically, these studies introduce new smoothness assumptions on the loss function, typically expressed as an upper bound on its Hessian. However, it remains unclear how these smoothness assumptions manifest in language modelling tasks and what properties of the dataset or network architecture they emerge from. 

Recently, \citet{kunstner_heavy-tailed_2024} identified \textit{heavy-tailed class imbalance} in language datasets as a key property that induces a performance gap between Adam and SGD. In language data, word frequency typically follows Zipf's law; the $k$-th most frequent word has frequency $p_k \propto \frac{1}{k}$ \citep{piantadosi_zipfs_2014}. Next token prediction suffers from heavy-tailed class imbalance because word frequency is inherited by the tokens. Under such conditions, SGD makes slow progress on low-frequency classes, which dominate the loss, resulting in poor overall performance. On the other hand, Adam is less sensitive to this issue and reduces loss on all classes, regardless of their frequency, leading to faster overall convergence. Interestingly, Adam outperforms GD even when training just the classification head of a simple one-layer transformer (embedding and attention weights frozen) in the full-batch setting \citep{kunstner_heavy-tailed_2024}.

In this work, we take the initial steps towards providing a complete analysis that explains the benefit of coordinate-wise adaptive algorithms over GD on language tasks. We avoid going down the route of proposing intermediate smoothness assumptions. Instead, we analyze the benefit of sign-based methods and non-Euclidean steepest descent directly from the properties of network architecture and data distribution, namely, heavy-tailed class imbalance. Inspired by the simple transformer setting in \citet{kunstner_heavy-tailed_2024}, we aim to design the simplest possible language modelling problem where we can provably show faster convergence of coordinate-wise algorithms such as Sign descent (steepest descent w.r.t. $\ell_\infty$ norm) over GD. 
% Recently, \citet{kunstner_heavy-tailed_2024} showed that heavy-tailed class imbalance is a key factor in the performance gap. SGD converges more slowly because it struggles to decrease the loss of low frequency classes, which comprise the majority of language data tokens, while Adam is less sensitive to this issue.  However, there are limited theoretical results that provably demonstrate the benefits of Adam under heavy-tailed class imbalance.  Concurrently, \citet{xie_adam_2025} argued that the classical $\ell_2$ smoothness assumption is not enough to explain the benefits of Adam in theory.  Instead, they provide a convergence analysis of Adam under anisotropic smoothness, which achieves the same rate of $\tilde{\mathcal{O}}(T^\frac{-1}{4})$ as previous results but with a much smaller smoothness constant empirically. 

\begin{figure}[!htbp]
    \centering
    \subfigure[PTB Dataset]{\includegraphics[width = 0.48\textwidth]{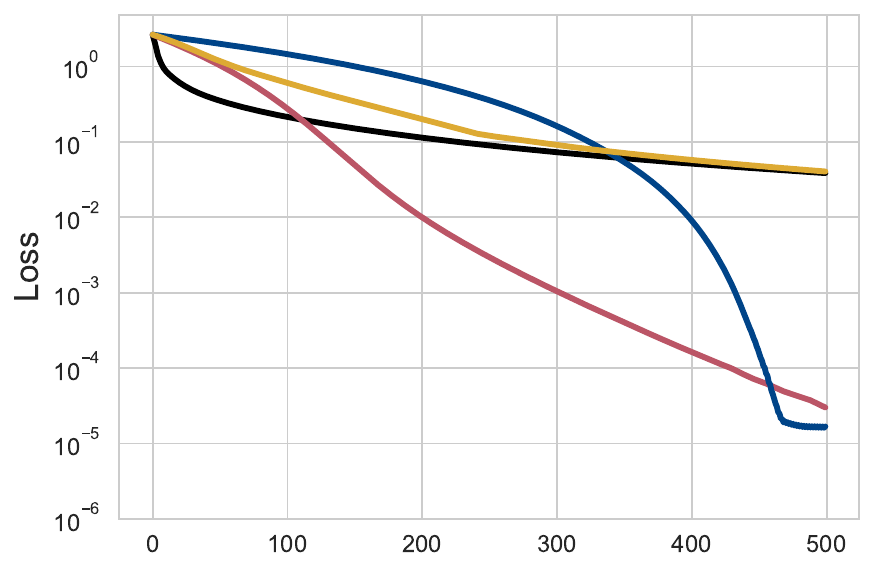}}
    ~
    \subfigure[Synthetic Power Law]{\includegraphics[width = 0.48\textwidth]{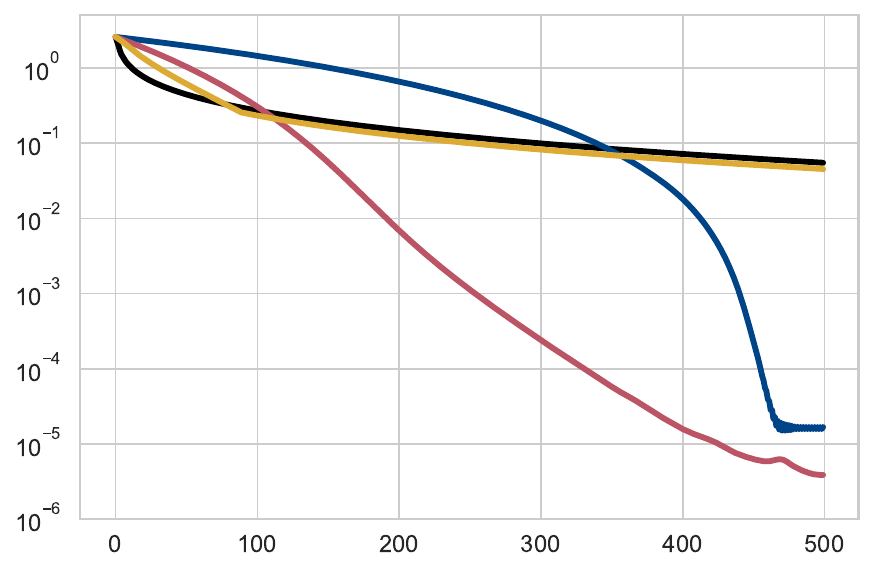}}
    ~
    \vspace{-4pt}
    \includegraphics[width=0.5\textwidth]{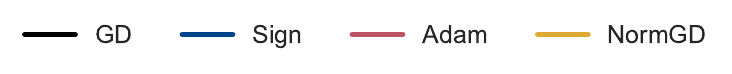}
     \vspace{-4pt}
    \caption{ \textbf{GD and NormGD struggle to optimize a simple softmax unigram model with heavy-tail class imbalance }. This result holds on a real-world dataset and synthetically generated data following a power-law distribution $p_k \propto \frac{1}{k}$.}
    \label{main:main_figure}
\end{figure}

In summary, our main contributions are:
\begin{enumerate}
    \item We introduce a simple convex, smooth problem with heavy-tail class imbalance, as shown in Figure \ref{main:main_figure}, where coordinate-wise algorithms such as Adam outperform GD by a large margin. 
    \item Within this minimal setting, we show that sign descent with weight decay provably converges faster than normalized GD with weight decay. 
\end{enumerate}
 % We demonstrate that existing smoothness assumptions, specifically anisotropic smoothness, may fail to predict faster convergence of Adam compared to SGD under heavy-tailed class imbalance, even on a simple unigram softmax model. We also highlight the theoretical and empirical benefits of sign descent (normalized steepest descent w.r.t. $\ell_\infty$ norm) on convex smooth problems with heavy-tailed class imbalance. We study the softmax linear bigram model and demonstrate the utility of ``logit-wise'' $\ell_\infty$ smoothness. 
% In summary, our main contributions are:
% \begin{enumerate}
%     \item We propose a simple convex and smooth problem where sign-based methods outperform GD. 
%     \item We study the softmax linear bigram model and introduce ``logit-wise" Adam, which partitions the weight matrix into blocks associated with each class. Surprisingly, logit-wise Adam performs similarly to Adam while only keeping track of a single learning rate for each class. 
% \end{enumerate}

\section{Background and Preliminaries}

\newcommand{\smoothness}[2]{L_{#2}(#1)}

\textbf{Notations.} We say a function $f$ is $L$-smooth w.r.t. norm $||\cdot||$ if for all $x, y \in \R^d$ we have $||\nabla f(x) - \nabla f(y)||_\star \leq L ||x - y||$ where $||\cdot||_\star$ denotes the dual norm. Let the smallest such constant be denoted by $\smoothness{f}{\|\cdot\|}$.  We use $\|\cdot\|_p$ to denote the  the $\ell_p$ norm for $p \in [1, \infty]$. For a positive semi-definite matrix $\mA$, the induced matrix norm is $\norm{x}_{\mA} = \sqrt{x^T\mA x}$.  We denote the set of minimizers of $f$ as $\argmin f$. We denote the softmax activation with $\sigma: \R^d \to \R^d$. 

% \textbf{Heavy-tailed class imbalance.} \zhiyuan{potentially move to intro}

\textbf{Steepest Descent.} 
\looseness=-1
Steepest descent with respect to a norm $||\cdot||$ is a general algorithm which iteratively minimizes a local quadratic upper bound on the loss i.e. $x_{t+1} = \argmin_{x \in \R^d} \nabla f(x_t)^T(x - x_t) + \frac{1}{2 \eta_t}||x - x_t||^2$. If we constrain the update direction to have unit norm, we obtain normalized steepest descent (NSD). The update step of NSD with weight decay factor $\lambda$ is $x_{t+1} = (1 - \lambda\eta_t)x_t - \eta_t\Delta_t$ where $\Delta_t$ is the normalized steepest descent direction. Sign Descent and Normalized GD are instances of normalized steepest descent w.r.t. $\ell_\infty$ and $\ell_2$ norms, respectively. We restate the convergence result for normalized steepest descent with weight decay (NSD-WD) provided by  \citet{xie_implicit_2024} in the next theorem. In \Cref{sec:softmax_unigram}, it will be useful in comparing different normalized steepest descent on a specific problem once $x_\star$ and $L$ are obtained. 
\begin{theorem}\label{main:convergence_rate}
    For any minimizer $x_\star$, suppose we run normalized steepest descent with weight decay of $\lambda \leq \frac{1}{\norm{x_\star}}$ and learning rate of $\eta_t = \frac{2}{\lambda(t+1)}$. Suppose $B = \max\{\frac{1}{\lambda}, \norm{x_0}\}$.  Then the iterates $\{x_t\}_{t=1}^T$ satisfy,
    \begin{equation*}
        f(x_T) - f^\star \leq \frac{2L(1 + B\lambda)^2}{\lambda^2(T+2)}.
    \end{equation*}

In particular, if we initialize $x_0=0$ and select $\lambda$ optimally, i.e., $\lambda = 1/ \min_{x_\star \in \argmin f}\norm{x_\star}$, we have,
    \begin{align*}
        f(x_T) - f^\star \leq \frac{\complexity{f}{\|\cdot\|}}{T+2},
    \end{align*}
where we define $\complexity{f}{\|\cdot\|} \triangleq 8L \min_{x_\star \in \argmin f}\|x_\star\|^2$ as the complexity of convex function $f$ under norm $\|\cdot\|$.
\end{theorem}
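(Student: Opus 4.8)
The plan is to recognize that normalized steepest descent with weight decay is exactly \frankwolfe run on the norm ball $\mathcal{B} = \{x : \norm{x}\le 1/\lambda\}$, and then to execute the textbook \frankwolfe analysis. Writing $\gamma_t = \lambda\eta_t$, the update rearranges to $x_{t+1} = (1-\gamma_t)x_t + \gamma_t s_t$ with $s_t = -\tfrac1\lambda\Delta_t$, since $\gamma_t s_t = \lambda\eta_t\cdot(-\tfrac1\lambda\Delta_t) = -\eta_t\Delta_t$. Because $\Delta_t$ is the unit-norm direction realizing $\ip{\nabla f(x_t)}{\Delta_t} = \norm{\nabla f(x_t)}_\star$, the point $s_t$ is precisely the linear minimization oracle $s_t = \argmin_{\norm{s}\le 1/\lambda}\ip{\nabla f(x_t)}{s}$ over $\mathcal{B}$. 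The schedule $\eta_t = \tfrac{2}{\lambda(t+1)}$ yields the canonical \frankwolfe step size $\gamma_t = \tfrac{2}{t+1}$, so (away from the first step) each iterate is a convex combination of $x_t$ and a point of $\mathcal{B}$.

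First I would establish iterate boundedness, $\norm{x_t}\le B$ for all $t$, by induction: since $\gamma_t\le 1$, $\norm{x_{t+1}}\le (1-\gamma_t)\norm{x_t} + \gamma_t\norm{s_t} \le (1-\gamma_t)B + \gamma_t/\lambda \le B$, using $\norm{s_t} = 1/\lambda \le B$ and $\norm{x_0}\le B$. This is the one place the initialization enters: when $\norm{x_0} > 1/\lambda$ the iterates need not lie in $\mathcal{B}$ itself, which is exactly why the effective radius is $B = \max\{1/\lambda, \norm{x_0}\}$ rather than the ball radius $1/\lambda$.

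Next I would invoke $L$-smoothness with respect to $\norm{\cdot}$ on the step $x_{t+1} - x_t = \gamma_t(s_t - x_t)$ to get
\begin{equation*}
 f(x_{t+1}) \le f(x_t) + \gamma_t\ip{\nabla f(x_t)}{s_t - x_t} + \tfrac{L\gamma_t^2}{2}\norm{s_t - x_t}^2 .
\end{equation*}
Because $\lambda\le 1/\norm{x_\star}$ forces $x_\star\in\mathcal{B}$, optimality of the oracle gives $\ip{\nabla f(x_t)}{s_t}\le\ip{\nabla f(x_t)}{x_\star}$, and convexity gives $\ip{\nabla f(x_t)}{x_\star - x_t}\le f^\star - f(x_t)$; together these yield $\ip{\nabla f(x_t)}{s_t - x_t}\le f^\star - f(x_t)$. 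Bounding $\norm{s_t - x_t}\le\norm{s_t} + \norm{x_t}\le 1/\lambda + B = (1+B\lambda)/\lambda$ and writing $\delta_t = f(x_t) - f^\star$, I obtain the scalar recursion $\delta_{t+1}\le (1-\gamma_t)\delta_t + \tfrac{L\gamma_t^2}{2}\cdot\tfrac{(1+B\lambda)^2}{\lambda^2}$.

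Finally I would solve this recursion by induction with $\gamma_t = \tfrac{2}{t+1}$, proving $\delta_t\le\tfrac{2L(1+B\lambda)^2}{\lambda^2(t+2)}$; the inductive step reduces to the elementary inequality $(t^2+t+1)(t+3)\le(t+1)^2(t+2)$ (equivalently $1-t\le 0$), valid for $t\ge 1$, with the base case checked directly. The ``in particular'' claim is then immediate: $x_0 = 0$ gives $B = 1/\lambda$, so $(1+B\lambda)^2 = 4$, and $\lambda = 1/\min_{x_\star}\norm{x_\star}$ turns the bound into $\tfrac{8L\min_{x_\star}\norm{x_\star}^2}{T+2} = \tfrac{\complexity{f}{\|\cdot\|}}{T+2}$. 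I expect the main obstacle to be the second step: controlling the iterates when $x_0$ lies outside $\mathcal{B}$ is the only place that departs from the verbatim \frankwolfe template and is precisely what replaces the clean diameter $2/\lambda$ by $1/\lambda + B$; the remaining steps are the standard descent-lemma-plus-recursion bookkeeping.
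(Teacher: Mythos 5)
The paper itself contains no proof of this theorem---it is restated from \citet{xie_implicit_2024}---so there is nothing internal to compare against; judged on its own, your plan is the right one and is essentially the argument of the cited source. The reduction of NSD with weight decay to \frankwolfe on the ball $\{x:\norm{x}\le 1/\lambda\}$ via $\gamma_t=\lambda\eta_t$ and $s_t=-\tfrac1\lambda\Delta_t$ is exactly the correct identification (the LMO over that ball returns $-\tfrac1\lambda$ times the dual-norm-achieving unit vector, which is $\Delta_t$), the duality-gap bound $\ip{\nabla f(x_t)}{s_t-x_t}\le f^\star-f(x_t)$ correctly uses $\lambda\le 1/\norm{x_\star}$ to place $x_\star$ in the ball, and your recursion algebra checks out: with $\gamma_t=\tfrac{2}{t+1}$ the inductive step indeed reduces to $(t^2+t+1)(t+3)\le(t+1)^2(t+2)$, which holds for $t\ge 1$. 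The "in particular" specialization ($B\lambda=1$, hence $(1+B\lambda)^2=4$ and the constant $8L\min\norm{x_\star}^2$) is also right. The one point you should make fully explicit is the first step: with the stated schedule, $\gamma_0=\lambda\eta_0=2>1$, so the convex-combination boundedness induction ($\gamma_t\le1$) and the claim $\norm{x_t}\le B$ do not apply at $t=0$; your parenthetical "(away from the first step)" acknowledges this but does not resolve it. The clean fix is to note that $\gamma_1=1$ forces $x_2=s_1\in\mathcal{B}$ and annihilates the $\delta_1$ term in the recursion, so the induction can be based at $t=2$ with $\delta_2\le \tfrac{L}{2}(1/\lambda+B)^2$ regardless of what happens at $t\le1$; the bound for $T=1$ itself does not follow from this argument and appears to be an indexing artifact of the restated theorem (the paper's own corollary already uses a different, inconsistent schedule $\eta_t=\tfrac{1}{\lambda(t+1)}$ with denominator $T+1$), not a defect of your approach.
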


% The following theorem presents a standard convergence result \citep{balles_geometry_2020}.
% \zhiyuan{we should specify the original theorem number in the cited paper. I am not sure about this result, does it actually exist?}
% \begin{theorem}
%     Suppose we run steepest descent with learning rate $\eta_t = \frac{1}{L}$. Then the iterates $\{x_t\}_{t=1}^T$ satisfy,
%     \begin{equation*}
%         f(x_T) - f^\star \leq \frac{2L\norm{x_0 - x_\star}^2}{T}.
%     \end{equation*}
% \end{theorem}

\section{Softmax Unigram Model}\label{sec:softmax_unigram}
In this section, we construct a convex and smooth problem where we can provably show that Sign descent converges faster than normalized GD. Although this problem is simple, it effectively captures the advantage of sign-based methods and $\ell_\infty$ smoothness over GD in the presence of language data with heavy-tailed class imbalance. Concretely, let the vocabulary consist of $d$ tokens, and let $p \in \R^d$ denote the vector of token proportions (sorted in decreasing order), where $p_k$ represents the proportion of token $k$ in the dataset. We impose the following assumption on $p$, which characterizes the notion of heavy-tailed class imbalance.

\begin{assumption}\label{main:ht_ass}
    For $k \in [d]$, we assume that $p_k = \frac{k^{-1}}{\sum_{j} j^{-1}}$ \textit{i.e.} $p_k \propto \frac{1}{k}$.
\end{assumption}
% \zhiyuan{does it matter if we change to $p_k \propto \frac{1}{k}^{-\alpha}$ for some $\alpha>0$?}
Now, let's consider the following minimization problem,
\begin{equation}\label{main:unigram}
    \boxed{f(\theta) = \KL\infdivx{p}{\text{softmax}(\theta)}}
\end{equation}
Minimizing $f$ corresponds to learning a unigram model of the data where the tokens are observed from a categorical distribution specified by $p$. In fact, it is equivalent to learning a ``transformer'' model with zero attention layers where every token is mapped to the same embedding vector. Despite this simplification, it captures the optimization difficulty that rotation-invariant algorithms such as GD face when training on language data.  In Figure \ref{main:main_figure}, we compare the performance of sign-based methods and GD when $p$ satisfies Assumption~\ref{main:ht_ass} with $d = 10^3$ and observe a significant gap in performance. Now that we have this minimal setting, we benefit from being able to determine tight lower and upper bounds on the smoothness constants as presented in the following lemma.  
\begin{lemma}\label{main:lemma_unigram_1}
    We have the following bounds on the $L_{\norm{\cdot}_2}(f)$ and $L_{\norm{\cdot}_\infty}(f)$ smoothness constants of $f$ in Eq. \ref{main:unigram},
    \begin{equation*}
        \frac{1}{2} \leq L_{\norm{\cdot}_2}(f) \leq 1, \ \text{and} \quad  L_{\norm{\cdot}_\infty}(f) = 1.
    \end{equation*}
\end{lemma}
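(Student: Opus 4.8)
The plan is to reduce everything to the Hessian of $f$ and to the quadratic form it induces. First I would rewrite $f(\theta) = \KL\infdivx{p}{\sigma(\theta)} = \sum_k p_k \log p_k - \sum_k p_k \log \sigma_k(\theta)$, and using $\log\sigma_k(\theta) = \theta_k - \log\sum_j e^{\theta_j}$ together with $\sum_k p_k = 1$, reduce this to $f(\theta) = C - \ip{p}{\theta} + \log\sum_j e^{\theta_j}$ for a constant $C$ independent of $\theta$. Differentiating gives $\nabla f(\theta) = \sigma(\theta) - p$ and $\nabla^2 f(\theta) = \Diag(\sigma(\theta)) - \sigma(\theta)\sigma(\theta)^\top$, which is exactly the covariance matrix of the categorical distribution $\sigma(\theta)$ and is therefore positive semidefinite, so $f$ is convex. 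Crucially, as $\theta$ ranges over $\R^d$, $\sigma(\theta)$ ranges over the relative interior of the probability simplex.

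Next I would record the characterization that, for a twice-differentiable convex $f$ with PSD Hessian, the smoothness constant w.r.t. a norm $\norm{\cdot}$ equals $L_{\norm{\cdot}}(f) = \sup_{\theta}\sup_{\norm{v}\le 1} v^\top \nabla^2 f(\theta) v$. The easy direction is $v^\top\nabla^2 f(\theta) v \le \norm{v}\,\norm{\nabla^2 f(\theta)v}_\star$; the nonobvious direction is that the dual-norm operator norm $\sup_{\norm{v}\le1}\norm{\nabla^2 f(\theta)v}_\star = \sup_{\norm{u}\le1,\norm{v}\le1} u^\top\nabla^2 f(\theta)v$ of a symmetric PSD matrix coincides with $\sup_{\norm{v}\le1} v^\top \nabla^2 f(\theta) v$, which I would prove via the symmetrization inequality $u^\top H v \le \tfrac12(u^\top H u + v^\top H v)$ valid for PSD $H$, showing the bilinear supremum is attained on the diagonal $u=v$. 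Plugging in the Hessian, the governing quantity becomes $v^\top\nabla^2 f(\theta)v = \sum_k \sigma_k v_k^2 - \left(\sum_k \sigma_k v_k\right)^2 = \Var_{k\sim\sigma(\theta)}[v_k]$, so both smoothness constants reduce to maximizing the variance of the ``value vector'' $v$ over the respective unit ball and over all $\sigma$ in the simplex.

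Finally I would evaluate the two suprema. For the $\ell_\infty$ constant, $\norm{v}_\infty \le 1$ forces $v_k \in [-1,1]$, and Popoviciu's inequality bounds the variance of any random variable supported in $[-1,1]$ by $1$; taking $\sigma$ concentrated half-half on two coordinates with $v = e_i - e_j$ shows this value is approached, yielding $L_{\norm{\cdot}_\infty}(f) = 1$ exactly. For the $\ell_2$ constant, the upper bound follows from $\nabla^2 f(\theta) \preceq \Diag(\sigma(\theta))$, whose largest eigenvalue is $\max_k \sigma_k \le 1$, giving $L_{\norm{\cdot}_2}(f)\le 1$; for the lower bound I would exhibit $\sigma$ putting mass $\tfrac12$ on coordinates $1$ and $2$ and $v = \tfrac{1}{\sqrt2}(e_1 - e_2)$ with $\norm{v}_2 = 1$, for which the variance equals $\tfrac12$, hence $L_{\norm{\cdot}_2}(f)\ge \tfrac12$. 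I expect the main obstacle to be the characterization step: getting the dual-operator-norm identity right for PSD matrices and confirming that, since softmax attains only the open simplex, these suprema are approached rather than attained (harmless, as the smoothness constant is itself a supremum). A minor subtlety is that the extremal two-point $\sigma$ lies on the boundary of the simplex, so I would formally realize it as a limit of softmax outputs.
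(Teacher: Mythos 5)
Your proposal is correct and follows essentially the same route as the paper: compute $\nabla^2 f(\theta)=\diag(\sigma(\theta))-\sigma(\theta)\sigma(\theta)^\top$, bound the quadratic form over the unit ball from above via $\nabla^2 f(\theta)\preceq\diag(\sigma(\theta))$, and obtain the lower bounds by letting the softmax concentrate equally on two coordinates (the paper realizes this limit with $\theta(t)=[t,\ t,\ -t,\ \cdots,\ -t]$ and tests $r=(1,-1)/\sqrt{2}$ and $r=(1,-1)$, which are exactly your extremal vectors). Your reframing of $v^\top\nabla^2 f(\theta)v$ as $\Var_{k\sim\sigma(\theta)}[v_k]$ with Popoviciu's inequality for the $\ell_\infty$ upper bound, and your explicit justification of the identity $L_{\norm{\cdot}}(f)=\sup_\theta\sup_{\norm{v}\le 1}v^\top\nabla^2 f(\theta)v$ for PSD Hessians (which the paper asserts without proof), are clean presentational refinements rather than a different argument.
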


\begin{figure}[!htbp]
    \centering
    \includegraphics[width = 0.53\textwidth]{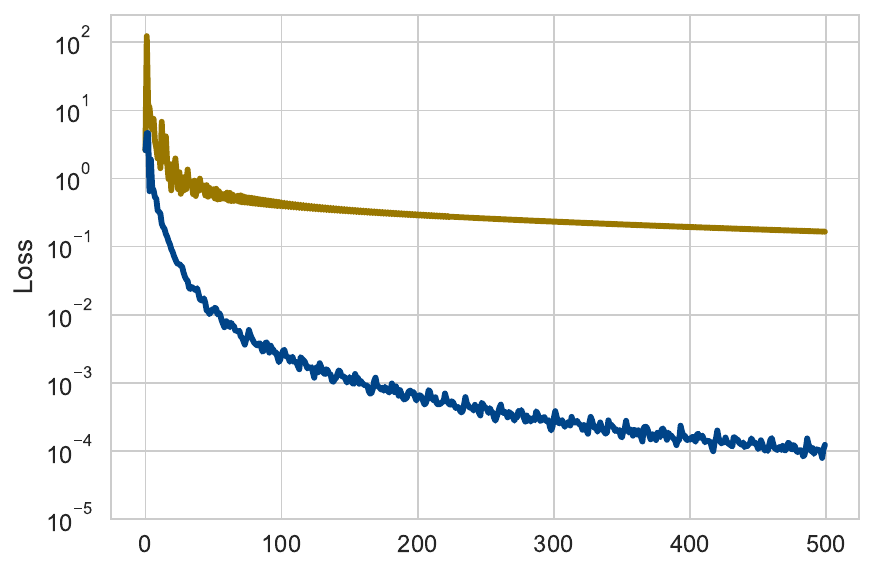}
    ~
    \hspace*{4em}
    \vspace{-4pt}
    \includegraphics[width=0.35\textwidth]{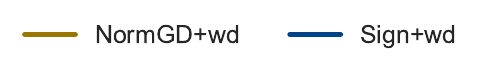}
    \vspace{-4pt}
    \hspace*{1em}
    \caption{Performance of NSD with weight decay when minimizing $f$ with $d = 10^3$. For each optimizer, we set $\lambda = \frac{1}{\min_{\theta_\star \in \argmin f }\norm{\theta_\star}}$ and use a learning rate of $\eta_t = \frac{2}{\lambda(t+1)}$.}
    % \caption{Performance of steepest descent and NSD with weight decay when minimizing the modified problem $\tilde{f}$. Steepest descent methods use a learning rate of $\eta_t = \frac{1}{L}$. For NSD with weight decay, we set $\lambda = \frac{1}{\norm{\theta_\star}}$ and use a learning rate of $\eta_t = \frac{2}{\lambda(t+1)}$. Note we refer to rescaled sign descent as steepest descent w.r.t. $\ell_\infty$ norm.}
    \label{main:figure_alt}
\end{figure}
To have a complete theoretical justification of the convergence rates, we must compute the optimal weight decay factor $\lambda$ for each norm. Thus, in the following lemma, we establish the minimal norm solution for the $\ell_\infty$ and $\ell_2$ norms. 

% We note that $f$ does not admit a unique solution because the softmax function $\sigma$ does not have a unique inverse. We choose to simplify analysis by considering a slightly modified function, $\tilde{f}$, which utilizes the additive logistic transformation $\tilde{\sigma}: \R^{d-1} \to \R^d$ instead. The additive logistic transformation is equivalent to computing the softmax over $[\theta_1 \ \cdots \ \theta_{d-1} \ 0]$. As a result, $\tilde{f}$ has the same smoothness constants as $f$.  

\begin{lemma}\label{main:lemma_unigram_2}
    For the optimal set of $\theta_\star$ of $f$, we have that,
    \begin{align*}
        \min_{\theta_\star \in \argmin f} \norm{\theta_\star}_\infty = \frac{1}{2}\log(d), \ \text{and} \quad  \min_{\theta_\star \in \argmin f} \norm{\theta_\star}_2 = \sqrt{d\Var_{k \sim \Unif[d]}[\log(k)]}.
    \end{align*}
\end{lemma}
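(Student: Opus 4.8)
The plan is to first pin down the entire set $\argmin f$ in closed form, and then observe that minimizing either norm over this set collapses to a single scalar optimization, which I can solve exactly for each norm.

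\textbf{Step 1 (characterize the minimizers).} Since $f(\theta)=\KL\infdivx{p}{\sigma(\theta)}\ge 0$ with equality iff $\sigma(\theta)=p$, the minimizers are exactly the softmax preimages of $p$. I would exploit the shift-invariance of softmax, $\sigma(\theta+c\mathbf{1})=\sigma(\theta)$: imposing $\sigma(\theta)_k=p_k$ forces $\theta_k=\log p_k+\log\big(\sum_j e^{\theta_j}\big)$, so every coordinate exceeds $\log p_k$ by one common constant. Hence $\argmin f=\{\log p+c\mathbf{1}:c\in\R\}$. Writing $u_k:=\log p_k=-\log k-\log H_d$ with $H_d:=\sum_{j=1}^d j^{-1}$ (using Assumption~\ref{main:ht_ass}), both quantities in the lemma reduce to $\min_{c\in\R}\norm{u+c\mathbf{1}}$ in the respective norm.

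\textbf{Step 2 ($\ell_\infty$).} The scalar problem $\min_c\max_k|u_k+c|$ is a one-dimensional Chebyshev centering problem, minimized at the midpoint $c^\star=-\tfrac12(\max_k u_k+\min_k u_k)$ with optimal value $\tfrac12(\max_k u_k-\min_k u_k)$. Because $p$ is sorted in decreasing order, $u_k=-\log k-\log H_d$ attains its maximum at $k=1$ (value $-\log H_d$) and its minimum at $k=d$ (value $-\log d-\log H_d$); the $\log H_d$ term cancels in the difference, giving range exactly $\log d$. Thus $\min_{\theta_\star\in\argmin f}\norm{\theta_\star}_\infty=\tfrac12\log d$.

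\textbf{Step 3 ($\ell_2$) and main obstacle.} Here $\min_c\sum_k(u_k+c)^2$ is minimized at $c=-\bar u$, the negative empirical mean of $u$, yielding $\sum_k(u_k-\bar u)^2=d\,\Var_{k\sim\Unif[d]}[u_k]$. Since variance is unchanged by the additive constant $-\log H_d$, this equals $d\,\Var_{k\sim\Unif[d]}[\log k]$, so $\min_{\theta_\star\in\argmin f}\norm{\theta_\star}_2=\sqrt{d\,\Var_{k\sim\Unif[d]}[\log k]}$. The only real content is Step~1, correctly identifying the minimizer set through softmax shift-invariance; Steps~2 and~3 are textbook scalar minimizations, and the one point requiring care there is invoking the decreasing-sort assumption on $p$ to place the extremes of $u_k$ at $k=1$ and $k=d$.
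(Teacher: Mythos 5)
Your proposal is correct and follows essentially the same route as the paper: identify $\argmin f$ as the shift-orbit $\{-\log k + c\}_{c\in\R}$ via softmax shift-invariance, then solve the one-dimensional minimization over $c$ separately for each norm (midpoint for $\ell_\infty$, mean for $\ell_2$). The only cosmetic difference is that the paper absorbs the $\log H_d$ term into the free constant $c$ from the start, whereas you carry it explicitly and note it cancels in the range and drops out of the variance.
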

For language modelling problems, the vocabulary size $d$ tends to be very large. We use this fact to prove the following lemma, which will allow us to estimate the minimal $\ell_2$ norm of the optimal solution $\min_{\theta_\star \in \argmin f} \norm{\theta_\star}_2$. 

\begin{lemma}\label{main:unigram_norm_lemma}
    For large $d$, we have that,
    \begin{align*}
        \Var_{k \sim \Unif[d]}[\log(k)] = \Theta(1). 
    \end{align*}
\end{lemma}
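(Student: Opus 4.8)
The plan is to pass from the discrete average to a continuous integral via the rescaling $x = k/d$. Writing $\log k = \log d + \log(k/d)$ and using that the variance is invariant under the additive shift by the constant $\log d$, we get
\begin{equation*}
\Var_{k\sim\Unif[d]}[\log k] = \Var_{k\sim\Unif[d]}\!\left[\log\tfrac{k}{d}\right] = \frac{1}{d}\sum_{k=1}^d \left(\log\tfrac{k}{d}\right)^2 - \left(\frac{1}{d}\sum_{k=1}^d \log\tfrac{k}{d}\right)^2.
\end{equation*}
The two averages are precisely the Riemann sums (with mesh $1/d$ and right endpoints) for $g(x)=(\log x)^2$ and $g(x)=\log x$ on $[0,1]$. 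The limiting integrals are elementary, $\int_0^1 \log x\,dx = -1$ and $\int_0^1 (\log x)^2\,dx = 2$, so the candidate limit of the variance is $2-(-1)^2 = 1$, which is the desired $\Theta(1)$ value. It therefore suffices to prove the two scalar limits $\frac1d\sum_{k=1}^d \log\tfrac{k}{d} \to -1$ and $\frac1d\sum_{k=1}^d (\log\tfrac{k}{d})^2 \to 2$ and combine them.

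To make this rigorous I would avoid quoting a generic Riemann-sum theorem, since $\log x$ and $(\log x)^2$ are unbounded near $x=0$. Instead I would use exact asymptotics. For the first moment, $\sum_{k=1}^d \log k = \log(d!)$, and Stirling's formula gives $\frac1d\log(d!) = \log d - 1 + O\!\left(\frac{\log d}{d}\right)$, hence $\frac1d\sum_{k=1}^d \log\tfrac{k}{d} = -1 + O\!\left(\frac{\log d}{d}\right)$. For the second moment I would compare $\sum_{k=1}^d (\log k)^2$ with $\int_1^d (\log x)^2\,dx = d(\log d)^2 - 2d\log d + 2d - 2$ via Euler–Maclaurin (or, since $(\log x)^2$ is monotone on $(0,1]$ after rescaling, by sandwiching the sum between left- and right-endpoint integral bounds). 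This yields $\frac1d\sum_{k=1}^d (\log\tfrac{k}{d})^2 = 2 + O\!\left(\frac{(\log d)^2}{d}\right)$. Substituting both into the displayed expression and cancelling the $(\log d)^2$ and $\log d$ terms leaves $\Var = 1 + O\!\left(\frac{(\log d)^2}{d}\right)$, which converges to $1$ and is thus $\Theta(1)$.

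The main obstacle is controlling the contribution of the smallest indices, where the integrand blows up: the $k=1$ term alone contributes $(\log\tfrac1d)^2 = (\log d)^2$ to $\sum(\log\tfrac{k}{d})^2$, and one must verify that this and the neighbouring terms do not spoil the limit. The key observation is that each such term is divided by $d$, so the total error from the singularity is $O\!\left(\frac{(\log d)^2}{d}\right) \to 0$; more carefully, the integral $\int_0^1 (\log x)^2\,dx$ converges, and the monotonicity of $(\log x)^2$ on $(0,1]$ lets me bound the discrepancy between the sum and the integral over the first interval $[0,1/d]$ explicitly by $\frac1d(\log d)^2$ plus the tail integral $\int_0^{1/d}(\log x)^2\,dx$, both of which vanish. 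Once this singularity bookkeeping is in place the two scalar limits follow, and the lemma is immediate.
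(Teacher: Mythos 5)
Your proof is correct, and at its core it uses the same machinery as the paper: both arguments reduce $\Var_{k\sim\Unif[d]}[\log k]$ to the two sums $\sum_k \log k$ and $\sum_k (\log k)^2$ and control them by comparison with $\int \log x\,dx$ and $\int (\log x)^2\,dx$ (your Stirling step is exactly the integral bound on $\log(d!)$ in disguise). The difference is in what each version extracts from that comparison. The paper sandwiches the two sums between explicit antiderivative evaluations at $1,2,d,d{+}1$ and grinds out the non-asymptotic bounds $0 < V_d \le 5$ for all $d\ge 2$; the price is some ad hoc numerics, including an unproved claim that the lower-bound expression is increasing in $d$. You instead rescale by $x=k/d$, use shift-invariance of the variance, and identify the limit as $\Var_{X\sim\Unif[0,1]}[\log X] = \int_0^1(\log x)^2\,dx - \bigl(\int_0^1\log x\,dx\bigr)^2 = 2-1 = 1$, with error $O\bigl((\log d)^2/d\bigr)$. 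That yields the sharp constant and a cleaner conceptual statement, and your explicit treatment of the singularity at $x=0$ (the $k=1$ term contributing $(\log d)^2/d \to 0$) closes the only point where a naive Riemann-sum argument would be incomplete. The one thing your asymptotic statement does not immediately give is a uniform bound for small fixed $d$, but since $V_d>0$ for every $d\ge 2$ and $V_d\to 1$, the $\Theta(1)$ claim follows either way.
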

With \Cref{main:lemma_unigram_1} characterizing $L_{\norm{\cdot}_2}(f)$ and $L_{\norm{\cdot}_\infty}(f)$ and \Cref{main:lemma_unigram_2} characterizing the minimal norm solutions $\norm{\theta_\star}_\infty$ and $\norm{\theta_\star}_2$, we can compare the complexity of $f$ under the $\ell_\infty$ and $\ell_2$ norms.  The next theorem demonstrates that for large $d$, the complexity of $f$ under the $\ell_\infty$ norm is much smaller than the complexity of $f$ under the $\ell_2$ norm. 

% \zhiyuan{split below theorem into 2 parts. the first is just comparing the two complexity of same function f under different norm. The second one is just corollary by plug the first theorem into convergence result in preliminary.}

\begin{theorem}\label{main:theorem}
    Suppose we have $f$ defined in  \Cref{main:unigram}. Then for large $d$,
    \begin{align*}
        \complexity{f}{\norm{\cdot}_\infty} = 2\log(d)^2 \ll d \sim \complexity{f}{\norm{\cdot}_2}. 
    \end{align*}
\end{theorem}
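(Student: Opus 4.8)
The plan is to obtain the theorem by directly substituting the three preceding lemmas into the definition $\complexity{f}{\norm{\cdot}} = 8\,L_{\norm{\cdot}}(f)\,\min_{\theta_\star \in \argmin f}\norm{\theta_\star}^2$ from \Cref{main:convergence_rate}, so that no new estimate is required; the remaining work is entirely bookkeeping of constants and asymptotic order. First I would handle the $\ell_\infty$ complexity, which is exact: \Cref{main:lemma_unigram_1} gives $L_{\norm{\cdot}_\infty}(f) = 1$ and \Cref{main:lemma_unigram_2} gives $\min_{\theta_\star}\norm{\theta_\star}_\infty = \tfrac{1}{2}\log(d)$, so
\begin{equation*}
\complexity{f}{\norm{\cdot}_\infty} = 8 \cdot 1 \cdot \left(\tfrac{1}{2}\log(d)\right)^2 = 2\log(d)^2,
\end{equation*}
matching the claimed value with no hidden constants.

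Next I would handle the $\ell_2$ complexity, which can only be pinned down up to constant factors because \Cref{main:lemma_unigram_1} bounds $L_{\norm{\cdot}_2}(f)$ only to the interval $[\tfrac{1}{2},1]$, i.e. $L_{\norm{\cdot}_2}(f) = \Theta(1)$. Combining this with $\min_{\theta_\star}\norm{\theta_\star}_2^2 = d\,\Var_{k\sim\Unif[d]}[\log(k)]$ from \Cref{main:lemma_unigram_2} and the estimate $\Var_{k\sim\Unif[d]}[\log(k)] = \Theta(1)$ from \Cref{main:unigram_norm_lemma}, I obtain $\complexity{f}{\norm{\cdot}_2} = 8\cdot\Theta(1)\cdot d\cdot\Theta(1) = \Theta(d)$, which is exactly the meaning of the symbol $\sim d$ in the statement. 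The separation $\ll$ then follows immediately because $\log(d)^2 = o(d)$ as $d\to\infty$, so the $\ell_\infty$ complexity $2\log(d)^2$ is asymptotically negligible compared to the $\ell_2$ complexity $\Theta(d)$.

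The main (and essentially only) obstacle here is interpretive rather than computational: the two complexities are reported at different levels of precision---the $\ell_\infty$ value is an exact equality, whereas the $\ell_2$ value is an order estimate $\Theta(d)$---so I would need to be explicit that the symbol $\sim$ is read as $\Theta(\cdot)$ and not as an exact asymptotic equivalence, since the constant multiplying $d$ is not determined (it depends on the unknown exact value of $L_{\norm{\cdot}_2}(f)$ within $[\tfrac{1}{2},1]$ and on the limiting value of $\Var_{k\sim\Unif[d]}[\log(k)]$). All the genuinely difficult ingredients---the smoothness bounds, the minimal-norm solutions, and the variance estimate---have already been established in \Cref{main:lemma_unigram_1,main:lemma_unigram_2,main:unigram_norm_lemma}, so the theorem is a direct corollary.
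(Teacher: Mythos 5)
Your proposal is correct and is essentially the argument the paper intends: the theorem is a direct consequence of plugging \Cref{main:lemma_unigram_1}, \Cref{main:lemma_unigram_2}, and \Cref{main:unigram_norm_lemma} into the definition $\complexity{f}{\norm{\cdot}} = 8L_{\norm{\cdot}}(f)\min_{\theta_\star}\norm{\theta_\star}^2$ from \Cref{main:convergence_rate}, and the paper gives no separate proof beyond these lemmas. Your remark that $\sim$ must be read as an order estimate (since $L_{\norm{\cdot}_2}(f)$ is only pinned to $[\tfrac{1}{2},1]$ and $\Var_{k\sim\Unif[d]}[\log(k)]$ only to $\Theta(1)$) is an accurate reading of the statement's intended precision.
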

Using \Cref{main:convergence_rate} together with \Cref{main:theorem}, we show that the convergence rate bound for normalized GD with weight decay is much smaller than that for Sign descent with weight decay.

\begin{corollary}
    Consider optimizing $f$ in \Cref{main:unigram} with large $d$ and initialized at $\theta_0 = 0$. Then the iterates of Sign descent with weight decay $\lambda_\infty = \frac{2}{\log(d)}$ and learning rate $\eta_t = \frac{1}{\lambda_\infty(t+1)}$ satify,
    \begin{align*}
        f(\theta_T) - f^\star \leq \frac{2\log(d)^2}{T + 1}.
    \end{align*}
    Furthermore, the iterates of normalized GD with weight decay $\lambda_2 = \frac{1}{\sqrt{d\Var_{k \sim \Unif[d]}[\log(k)]}}$ and learning rate $\eta_t = \frac{1}{\lambda_2(t+1)}$ satisfy,
    \begin{align*}
        f(\theta_T) - f^\star \leq \frac{d}{T + 1}.
    \end{align*}
\end{corollary}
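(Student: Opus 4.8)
The plan is to obtain both bounds as direct specializations of \Cref{main:convergence_rate} to the $\ell_\infty$ and $\ell_2$ geometries, feeding in the complexity values already computed in \Cref{main:theorem}. The key observation is that the quoted weight-decay factors are exactly the optimal choices $\lambda = 1/\min_{\theta_\star \in \argmin f}\norm{\theta_\star}$ appearing in the ``in particular'' part of \Cref{main:convergence_rate}: by \Cref{main:lemma_unigram_2}, $\min_{\theta_\star}\norm{\theta_\star}_\infty = \tfrac12\log(d)$ gives $\lambda_\infty = 2/\log(d)$, and $\min_{\theta_\star}\norm{\theta_\star}_2 = \sqrt{d\Var_{k\sim\Unif[d]}[\log(k)]}$ gives $\lambda_2 = 1/\sqrt{d\Var_{k\sim\Unif[d]}[\log(k)]}$. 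Since we initialize at $\theta_0 = 0$ and the feasibility condition $\lambda \leq 1/\norm{\theta_\star}$ is met with equality at the minimal-norm minimizer, \Cref{main:convergence_rate} applies verbatim and yields $f(\theta_T) - f^\star \leq \complexity{f}{\norm{\cdot}}/(T+2)$ in each geometry.

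For sign descent I would substitute $\complexity{f}{\norm{\cdot}_\infty} = 2\log(d)^2$ from \Cref{main:theorem} (which combines $L_{\norm{\cdot}_\infty}(f) = 1$ from \Cref{main:lemma_unigram_1} with the minimal $\ell_\infty$ norm), obtaining $f(\theta_T) - f^\star \leq 2\log(d)^2/(T+2)$; weakening the denominator via $T+2 \geq T+1$ then gives the claimed $2\log(d)^2/(T+1)$. For normalized GD I would substitute $\complexity{f}{\norm{\cdot}_2} = 8 L_{\norm{\cdot}_2}(f)\, d\,\Var_{k\sim\Unif[d]}[\log(k)]$; by \Cref{main:lemma_unigram_1} and \Cref{main:unigram_norm_lemma} this is $\Theta(d)$, matching the $\complexity{f}{\norm{\cdot}_2} \sim d$ of \Cref{main:theorem}, and dividing by $T+2$ (and relaxing to $T+1$) produces the $d/(T+1)$ bound up to the $\Theta(1)$ constant.

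Because the analytic work has already been discharged in the preceding lemmas and theorems, I do not expect a substantive obstacle; the corollary is essentially bookkeeping. The two points that require care are: (i) reconciling the learning-rate normalization, since \Cref{main:convergence_rate} is stated with $\eta_t = 2/(\lambda(t+1))$ while the corollary quotes $\eta_t = 1/(\lambda(t+1))$, so one must confirm that the $\complexity{f}{\norm{\cdot}}/(T+2)$ rate is indeed the one attained by the quoted schedule (or absorb any slack into the harmless $T+2 \to T+1$ relaxation); and (ii) tracking the $O(1)$ constant in the $\ell_2$ case, where $\complexity{f}{\norm{\cdot}_2}$ is only asymptotically proportional to $d$, so the clean $d/(T+1)$ statement holds for large $d$ only after folding in the constant controlled by $\Var_{k\sim\Unif[d]}[\log(k)] \to 1$ and $L_{\norm{\cdot}_2}(f) \leq 1$. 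The entire content of the corollary is the contrast $2\log(d)^2 \ll d$, which transfers directly from \Cref{main:theorem} into the two convergence rates.
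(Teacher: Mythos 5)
Your proposal is correct and is exactly the paper's (implicit) argument: the corollary is stated as a direct instantiation of \Cref{main:convergence_rate} with the optimal $\lambda$ from \Cref{main:lemma_unigram_2} and the complexities from \Cref{main:theorem}, followed by the harmless relaxation $T+2 \to T+1$. The two caveats you flag are real looseness in the paper's statement rather than gaps in your reasoning — the quoted $\eta_t = 1/(\lambda(t+1))$ appears to be a factor-of-2 typo relative to the theorem's schedule, and the $\ell_2$ bound $d/(T+1)$ holds only up to the $\Theta(1)$ factor $8L_{\norm{\cdot}_2}(f)\Var_{k\sim\Unif[d]}[\log(k)]$ — so your bookkeeping matches what the authors intended.
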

%\begin{theorem}\label{main:corollary}
    % Consider optimizing $\tilde{f}$.  Let $\lambda_\infty = \frac{1}{||\theta_\star||_\infty}$ and $\lambda_2 = \frac{1}{||\theta_\star||_2}$ with $B_2 = \frac{1}{\lambda_2}$ and $B_\infty = \frac{1}{\lambda_\infty}$. Suppose we run sign descent with weight decay and learning rate of $\eta_t = \frac{2}{\lambda_\infty(t+1)}$ and normalized gradient descent with weight decay and $\eta_t = \frac{2}{\lambda_2(t+1)}$. Then, 
    % \begin{equation*}
    %     \frac{2L_{\norm{\cdot}_\infty}(f)(1+B_\infty\lambda_\infty)^2}{(T+2)\lambda_\infty^2} = \frac{8\log(d)^2}{T+2} \ll \frac{4\sum_{j=1}^{d-1} \log(j/d)^2}{T+2} \leq \frac{2L_{\norm{\cdot}_2}(f)(1+B_2\lambda_2)^2}{(T+2)\lambda_2^2}
    % \end{equation*}
        %  Suppose we run rescaled sign descent with learning rate $\eta_t = 1$ and gradient descent with $\eta_t = \frac{1}{2}$. Then
    % \begin{equation*}
    %     \frac{L_{\norm{\cdot}_\infty}(f)\norm{\theta_\star - \theta_0}^2_\infty}{2T} = \frac{\log(d)^2}{2T}<< \frac{\sum_{j=1}^{d-1} \log(j)^2}{4T} = \frac{L_{\norm{\cdot}_2}(f)\norm{\theta_\star - \theta_0}^2_2}{2T}.
    % \end{equation*}
%\end{theorem}
Our theoretical investigation predicts that Sign descent should converge much faster than normalized GD (with weight decay). In \Cref{main:figure_alt}, we verify our results by showing the empirical performance of sign descent against normalized GD (with weight decay) when optimizing $f$ using theoretically justified learning rates and $\lambda$ according to \Cref{main:convergence_rate}. As suggested by \Cref{main:theorem}, sign descent with weight decay significantly outperforms normalized GD with weight decay. We also show that our analysis and experimental results hold for a modified softmax example in \Cref{sec:additive}. 

% Even without normalization or weight decay, sign-based methods \textit{i.e.} steepest descent in $\ell_2$ norm outperforms GD empirically by a wide margin, mirroring their behaviour on $f$. 

\section{Challenges in Extending to Adam}
Prior works have constructed simple, ill-conditioned, diagonal quadratic functions where Adam and sign descent outperform (normalized) GD \citep{xie_implicit_2024, xie_adam_2025, kunstner_heavy-tailed_2024}. Our work is motivated by distilling transformer models on language data to a minimal setting where we can provably show faster convergence of adaptive coordinate-wise methods. Although diagonal quadratics may appear simpler than our formulation, we argue that they are less representative of language modeling. In particular, our construction incorporates the softmax projection, a core component of transformer architectures. Thus, our setting is simple but also serves as an abstraction of training on language data with heavy-tailed class imbalance. 

% However, the next lemma shows that the standard theory for steepest descent provides some benefits for a rescaled version of sign descent and sign descent with weight decay. Although we consider a slightly modified problem, $\tilde{f}$, which has a unique solution. Specifically, we replace $\sigma$ with the additive logistic transformation, $\tilde{\sigma}: \R^{d-1} \to \R^d$ which is equivalent to computing the softmax over $[0 \ \theta_1 \ \cdots \ \theta_{d-1}]$. As a result, the smoothness constants for $f$ and $\tilde{f}$ are the same. 
% \begin{lemma}
%     For the optimal $\theta_\star$ of $\tilde{f}$, we have $\norm{\theta_\star}^2_\infty = \log(d)^2$ and $\norm{\theta_\star}^2_2 = \sum_{j=2}^{d} \log(j)^2$. Let $\lambda = \frac{1}{||\theta_\star||}$ for the corresponding norm. Thus, for $d$ large and $\theta_0 = 0$, we have that,
%     \begin{equation*}
%         \frac{L_{\norm{\cdot}_\infty}(f)\norm{\theta_\star - \theta_0}^2_\infty}{2T} << \frac{L_{\norm{\cdot}_2}(f)\norm{\theta_\star - \theta_0}^2_2}{2T}, \ \text{and} \quad \frac{8L_{\norm{\cdot}_\infty}(f)}{(T+1)\lambda_\infty^2} << \frac{8L_{\norm{\cdot}_2}(f)}{(T+1)\lambda_2^2}
%     \end{equation*}
% \end{lemma}

\textbf{Adaptive smoothness}. \citet{xie_adam_2025, xie_structured_2025} propose another explanation of the gap between coordinate-wise algorithms and GD. Namely, they argue that the commonly assumed $\ell_2$ smoothness in convergence analysis is not a tight enough characterization of the loss function to explain the optimization benefits of Adam over SGD. To rectify the existing gap in theory, they generalize the notion of smoothness to adaptive methods with general structured preconditioners, including Adam, blockwise Adam, and one-sided shampoo. 

% and a generalized theoretical analysis for blockwise Adam (which includes Adam and AdaSGD as special cases). \zhiyuan{\citet{structured preconditioner} generalizes the notion of smoothness to adaptive methods with general structured preconditioners, including Adam, blockwise Adam and also 1-sided shampoo.}
% For our purposes, we state their assumption without the blockwise generalization.

%\zhiyuan{@robin: add the citation of the convex convergence result}
\begin{definition}[Adaptive Smoothness]
    The adaptive smoothness of a function $f$ w.r.t. a subalgebra $\mathcal{K}$ is defined as the smallest smoothness of $f$ w.r.t. all norm $\|\cdot\|_{\mA}$ where $\mA\in\mathcal{K},\mA\succeq 0,\tr(\mA)\le 1$, that is, 
    \begin{equation}\label{main:adap_smooth}
    \cL_{\mathcal{K}}(f) = \min_{\substack{\mA\in \mathcal{K}\\\mA\succeq 0\\\tr(\mA)\le 1}} L_{\|\cdot\|_\mA}(f) =\min_{\substack{\mA\in\mathcal{K} \\ \forall x, -\mA \preceq \nabla^2 f(x) \preceq \mA}} \tr(\mA). 
    \end{equation}
    In particular, when the subalgebra $\mathcal{K}$ is the set of diagonal matrices, we call the above smoothness notion the \emph{diagonal adaptive smoothness}, and it is the minimal trace of the diagonal matrix that dominates the Hessian:
    \begin{equation}\label{main:adap_diagonal}
    \cL_{\text{diag}}(f) = \min_{\substack{\mA \ \diag \\ \forall x, \nabla^2 f(x) \preceq \mA}} \tr(\mA). 
\end{equation}
\end{definition}
The diagonal adaptive smoothness of $f$ is equivalent to anisotropic smoothness \citep{liu_adagrad_2024}, \textit{i.e.} $1$-smoothness under the norm induced by $\mA^\star$ where $\mA^\star$ is the matrix that achieves the minimum trace in \Cref{main:adap_smooth}. The convergence analysis for convex problems in \citet{xie_structured_2025} suggests that adaptive coordinate-wise algorithms such as Adam optimize faster than rotationally-invariant SGD methods when $\cL_{\text{diag}}(f) \ll dL_{\norm{\cdot}_2}(f)$. The following lemma provides a lower bound for the adaptive smoothness constant of $f$ in \Cref{main:unigram}.

% \zhiyuan{adaptive smoothness should be move into a discussion section }
\begin{lemma}\label{main:unigram_adap}
    Consider $f$ defined in Eq. \ref{main:unigram}. Then the diagonal adaptive smoothness of $f$ is at least $\frac{d}{2}$. 
\end{lemma}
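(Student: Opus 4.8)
The plan is to compute the Hessian of $f$ explicitly and then lower-bound the trace of any dominating diagonal matrix by testing the semidefinite constraint against a well-chosen family of two-sparse vectors. Writing the objective as $f(\theta) = \sum_k p_k \log p_k - p^\top \theta + \log\sum_j e^{\theta_j}$ and using $\sum_k p_k = 1$, all curvature comes from the log-sum-exp term, so $\nabla^2 f(\theta) = \diag(\sigma(\theta)) - \sigma(\theta)\sigma(\theta)^\top$, the standard softmax Hessian. Denote $s = \sigma(\theta)$; as $\theta$ ranges over $\R^d$, $s$ ranges over the open probability simplex.

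Next I would translate the domination constraint in \Cref{main:adap_diagonal} into a family of scalar inequalities. A diagonal $\mA$ is feasible iff $\mA \succeq \diag(s) - ss^\top$ for every attainable $s$, i.e. $v^\top \mA v \geq \Var_{i \sim s}[v_i]$ for all $v \in \R^d$, where I used the identity $v^\top(\diag(s) - ss^\top)v = \sum_i s_i v_i^2 - (\sum_i s_i v_i)^2$, which is exactly the variance of the coordinates of $v$ under the distribution $s$. Note that testing with $v = e_k$ alone only gives $\mA_{kk} \geq \sup_s s_k(1-s_k) = \tfrac14$, hence the weak bound $\tr(\mA) \geq d/4$; the factor-two improvement requires correlating two coordinates.

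The key step is, for each pair $i \neq j$, to take the test vector $v = e_i - e_j$ together with a distribution concentrating mass close to $1/2$ on each of $i$ and $j$. A short computation gives $\Var_{i\sim s}[v_i] = s_i(1-s_i-\cdots)$-type expression equal to $t(1-t)\,(v_i - v_j)^2$ when all mass sits on $\{i,j\}$ with $s_i = t$, maximized at $t = 1/2$ to the value $\tfrac14(v_i-v_j)^2 = 1$. Since $v^\top \mA v = \mA_{ii} + \mA_{jj}$, this yields $\mA_{ii} + \mA_{jj} \geq 1$ for every pair. Summing over all $\binom{d}{2}$ pairs, each diagonal entry appears in $d-1$ of them, so $(d-1)\tr(\mA) \geq \binom{d}{2} = \tfrac{d(d-1)}{2}$, hence $\tr(\mA) \geq d/2$. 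As this holds for every feasible diagonal $\mA$, taking the minimum gives $\cL_{\text{diag}}(f) \geq d/2$.

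The main obstacle is that softmax only reaches the \emph{open} simplex, so the boundary distributions achieving the extremal variance $1/4$ are never attained exactly. I would resolve this by continuity of $\theta \mapsto \nabla^2 f(\theta)$ and closedness of the feasibility constraint: choosing $\theta$ with $s_i, s_j$ within $\epsilon$ of $1/2$ and the remaining $s_k$ within $\epsilon$ of $0$ gives $\mA_{ii} + \mA_{jj} \geq 1 - O(\epsilon)$, and letting $\epsilon \to 0$ recovers the clean inequality. Equivalently, one may observe that the active $2\times 2$ block of the Hessian converges to the softmax Hessian of a two-point distribution, whose supremal curvature in the $e_i - e_j$ direction is precisely $1/4$.
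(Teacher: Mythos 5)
Your proof is correct and follows essentially the same route as the paper's: concentrate the softmax mass equally on a pair $(i,j)$, deduce $\mA_{ii}+\mA_{jj}\geq 1$, and average this inequality over all $\binom{d}{2}$ pairs to get $\tr(\mA)\geq d/2$. The only cosmetic difference is that you obtain $\mA_{ii}+\mA_{jj}\geq 1$ directly from the single test direction $e_i-e_j$, whereas the paper derives the full $2\times 2$ domination condition $(\alpha_i-\tfrac14)(\alpha_j-\tfrac14)\geq \tfrac1{16}$ and then applies AM--GM; you also explicitly handle the fact that the two-point distribution is only attained in the limit, a technicality the paper glosses over.
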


This lemma demonstrates that adaptive smoothness is not much smaller than $dL_{\norm{\cdot}_2}(f)$ for $f$ given in \Cref{main:unigram}. Consequently, adaptive smoothness alone cannot account for the advantages of Adam over rotationally invariant algorithms such as SGD in this setting. This is a key distinction between the softmax unigram model and potentially simpler diagonal quadratics setups. For instance, consider the simple diagonal quadratic example proposed in \citet{kunstner_heavy-tailed_2024}, $g(\theta) = \sum_{k=1}^d p_k\theta_k^2$ where $p \in \R^d$ satisfies \Cref{main:ht_ass}. The Hessian of $g$ is $\nabla^2 g(\theta) = \diag(p)$, and its adaptive smoothness is $\cL_{\text{diag}}(g) = 1$, which is much smaller than $dL_{\norm{\cdot}_2}(f)$ for large $d$. Thus, while adaptive smoothness predicts faster convergence of Adam over SGD for diagonal quadratics with heavy-tailed class imbalance, it fails to do so for the softmax unigram model, highlighting the difference of our setup from quadratic models. We leave the theoretical explanation of why Adam outperforms GD empirically on this softmax unigram model to future work.

% \shuo{I am not sure whether we want a completely new section for this. If \Cref{sec:softmax_linear} doesn't involve adaptive smoothness, we can put the discussion in a subsection in \Cref{sec:softmax_unigram} and name it as extension to Adam. }

\section{Related Works}
\looseness=-1
 Prior work provides both convergence analyses and empirical evidence aimed at explaining the performance gap between Adam and SGD. \citet{zhang_why_2020} suggests that SGD struggles more with heavy-tailed gradient noise, while \citet{kunstner_noise_2022} shows that the gap persists even in the full-batch setting, with deterministic Adam resembling sign descent \citep{balles_dissecting_2018}. \citet{kunstner_scaling_2025} proves scaling laws for sign and gradient descent on a linear bigram model with quadratic loss. Other explanations include coordinate-wise clipping reducing directional sharpness \citep{pan_toward_2023} and block-wise Hessian heterogeneity in transformers, favoring Adam \citep{zhang_why_2024}. Finally, \citet{ward_adagrad_2020, levy_storm_2021} show that Adam can achieve optimal convergence rates without relying on problem-dependent constants. 

\section{Conclusion and Future Works} 
We focus on a simplified setting of language modelling where we can provably show that non-Euclidean steepest descent methods converge faster than GD with weight decay. Future work includes extending the analysis to more complex setups, such as the stochastic setting. 
It also remains to develop adaptive smoothness assumptions that better capture the gap between Adam and SGD.

\section{Acknowledgements}
This work is supported by Darpa AIQ grant and OpenAI superalignment grant.

\bibliography{bibliography}
\newpage
\clearpage
\appendix

\section{Proofs for Unigram Model}
\subsection{Proof of Lemma \ref{main:lemma_unigram_1}}
First, we can express $f$ as,
\begin{equation*}
    f(\theta) = \sum_{i=1}^d \left[p_i\log(p_i) - p_i\theta_i + p_i\log\left( \sum_{j=1}^d \exp(\theta_j)\right)\right]
\end{equation*}
Computing the derivative of $f$ w.r.t to $\theta_k$ and using the fact that $\sum_{i=1}^d p_i = 1$ we obtain,
\begin{equation*}
    \frac{\partial f}{\partial \theta_k} = -p_k + \sum_{i=1}^d p_i \frac{\exp(\theta_k)}{\sum_j \exp(\theta_j)} = -p_k + \sigma(\theta)_k \sum_{i=1}^dp_i = \sigma(\theta)_k - p_k. 
\end{equation*}
Therefore, $\nabla f(\theta) = \sigma(\theta) - p$. Thus, the Hessian of $f$ is simply the Jacobian of the softmax function. It's not too hard to compute,
\begin{equation*}
    \frac{\partial \sigma_i}{\partial \theta_i} = \sigma(\theta)_i(1 - \sigma(\theta)_i), \ \text{and} \quad \frac{\partial \sigma_j}{\partial \theta_i}  = - \sigma(\theta)_i\sigma(\theta)_j, \ \text{for } \ i \neq j . 
\end{equation*}
Therefore, $\nabla^2 f(\theta) = \diag(\sigma(\theta)) - \sigma(\theta)\sigma(\theta)^\top$. 
\subsubsection{Upper Bounds}
Now we can compute the upper bounds on the smoothness constants of $f$. Firstly, note that,
\begin{equation*}
    L_{\norm{\cdot}_p}(f) = \sup_{\theta \in \R^d}\sup_{\norm{u}_p \leq 1} u^\top\nabla^2f(\theta)u. 
\end{equation*}
Then, $\nabla^2 f(\theta) \preceq \diag(\sigma(\theta)) \preceq \mI_d$. Thus $u^\top\nabla^2f(\theta)u \leq u^Tu = \norm{u}_2^2$. Therefore, $L_{\norm{\cdot}_2}(f) \leq 1$. For $\norm{u}_\infty \leq 1$,
\begin{equation*}
    u^\top\nabla^2f(\theta)u \leq \ u^\top\diag(\sigma(\theta))u = \sum_{k=1}^d \sigma(\theta)_k u_k^2 \leq \sum_{k=1}^d \sigma(\theta)_k \abs{u_k} \leq \norm{\sigma(\theta)}_1\norm{u}_\infty \leq 1. 
\end{equation*}
Since $\norm{u}_\infty \leq 1$ we have that $u_k^2 \leq |u_k|$. The second-to-last inequality is because of Cauchy-Schwarz, and then we note that $\norm{\sigma(\theta)}_1 = 1$. 
\subsubsection{Lower Bounds}
Next, we compute the lower bounds on $L_{\norm{\cdot}_2}(f)$ and $L_{\norm{\cdot}_\infty}(f)$. Consider $\theta(t): \R \to \R^d$ as follows $\theta(t) = [t \ t \ -t \ \cdots \ \ -t]$. Then, define,
\begin{align*}
    s(t) := \sigma(\theta(t))_1 = \sigma(\theta(t))_2 =  \frac{e^t}{2e^t + (d-2)e^{-t}}. 
\end{align*}
For any $r \in \R^2$ with $\norm{r}_2 = 1$ define $u = [r_1 \ r_2 \ 0 \ \cdots \ 0] \in \R^d$. Let,
\begin{align*}
    \mA(t) = 
    \begin{bmatrix}
        s(t)(1-s(t)) & -s(t)^2 \\
        -s(t)^2 & s(t)(1 - s(t))
    \end{bmatrix}
\end{align*}
Then, $u^T\nabla^2f(\theta(t))u = r^T\mA(t)r$. Since $r$ is arbitrary, we see that,
\begin{align*}
    \sup_{\norm{u}_2 = 1} u^T\nabla^2f(\theta(t))u \geq \sup_{\norm{r}_2 = 1} r^T\mA(t)r.
\end{align*}
Therefore, we must compute the largest eigenvalue of $\mA(t)$. Solving the equation $\det(\mA(t) - \lambda\mI) = 0$, we get the following characteristic polynomial,
\begin{equation*}
    \lambda^2 - 2\lambda s(t)(1-s(t)) + s(t)^2 - 2s(t)^3 = 0.
\end{equation*}
Solving for $\lambda$, we get the following two solutions,
\begin{align*}
    \lambda_1 = s(t)(1-s(t)) + s(t)^2 = s(t), \ \text{and} \quad \lambda_2 = s(t)(1-s(t)) - s(t)^2 = s(t) - 2s(t)^2. 
\end{align*}
Since $s(t) > 0$, it is clear that $\lambda_1 = s(t)$ is the greatest eigenvalue of $\mA(t)$. Observe that $\lim_{t \to \infty} s(t) = \frac{1}{2}$. Now, we apply the following reasoning,
\begin{align*}
    L_{\norm{\cdot}_2}(f) = \sup_{\theta \in \R^d} \sup_{\norm{u}_2 = 1} u^T\nabla^2f(\theta)u &\geq \lim_{t \to \infty} \sup_{\norm{u}_2 = 1} u^T\nabla^2f(\theta(t))u \\
    &\geq \lim_{t \to \infty} \sup_{\norm{r}_2 = 1} r^T\mA(t)r \\
    &=\lim_{t \to \infty} s(t) = \frac{1}{2}. 
\end{align*}
Now, we compute the lower bound for $L_{\norm{\cdot}_\infty}(f)$. We follow the same steps as above but instead consider $r \in \R^2$ with $\norm{r}_\infty = 1$.  Again, $u^\top\nabla^2f(\theta(t))u = r^\top\mA(t)r$. Therefore,
\begin{align*}
    \sup_{\norm{u}_\infty = 1} u^T\nabla^2f(\theta(t))u \geq \sup_{\norm{r}_\infty = 1} r^T\mA(t)r.
\end{align*}
Of course, $r_1, r_2 \in \{-1, 1\}$ and $\mA(t)$ is symmetric, so it is easy to determine the supremum. In fact, it is achieved with $\bar{r} = [1, -1]$, $\bar{r}^\top\mA(t)\bar{r} = 2s(t)$. Applying the same inequalities as before,
\begin{align*}
    L_{\norm{\cdot}_\infty}(f) = \sup_{\theta \in \R^d} \sup_{\norm{u}_\infty = 1} u^T\nabla^2f(\theta)u &\geq \lim_{t \to \infty} \sup_{\norm{u}_\infty = 1} u^T\nabla^2f(\theta(t))u \\
    &\geq \lim_{t \to \infty} \sup_{\norm{r}_\infty = 1} r^T\mA(t)r \\
    &=\lim_{t \to \infty} 2s(t) = 1. 
\end{align*}

% by considering $\sigma(\theta') = [\frac{1}{2} \ \frac{1}{2} \ 0 \ \cdots \ 0]$. Therefore, 
% \[
% \nabla^2 f(\theta') = 
% \begin{bmatrix}
% 0.25 & -0.25 & 0 & \cdots \\
% -0.25 & 0.25 & 0 & \cdots \\
% 0 & 0 & 0 & \cdots \\
% \vdots & \vdots & \vdots & \ddots
% \end{bmatrix}
% \]
% The largest eigenvalue of this matrix is $\frac{1}{2}$ so it must be that $L_{\norm{\cdot}_2}(f) \geq \frac{1}{2}$. Next, we can get that,
% \begin{equation*}
%     u^T\nabla^2f(\theta')u = \frac{1}{4}(u_1^2 - 2u_1u_2 + u_2^2) = \frac{1}{4}(u_1 - u_2)^2. 
% \end{equation*}
% Selecting $u_1 = 1$, $u_2 = -1$ and $u_k = 0$ for $k > 2$ we get that $\sup_{\norm{u}_\infty = 1} u^T\nabla^2f(\theta')u \geq 1$. Therefore, $L_{\norm{\cdot}_\infty}(f) \geq 1$. 

\subsection{Proof of \Cref{main:lemma_unigram_2}}
Of course, for a given $\theta \in \argmin f$ we must have that $\sigma(\theta) = p$. We note that $\sigma$ is invariant up to a constant shift. Since $p_k = k^{-1} / \sum_i i^{-1}$, the set of optimal solutions is described by, 
\begin{equation*}
    \theta_k = -\log(k) + c, \ \ \text{for} \ k \in [d],
\end{equation*}
where $c \in \R$ is a fixed constant. Now, we find the $c$ that minimizes $\norm{\theta_\star}_\infty$ for $\theta_\star \in \argmin f$. Since $\log$ is an increasing function, it is not too hard to see that the value of $\norm{\theta_\star}_\infty = \max\{\abs{c}, \abs{-\log(d) + c}\}$ is achieved in the first or last entry. Therefore, the $c$ that minimizes this quantity must be the midpoint \textit{i.e.} $c = \frac{\log(d)}{2}$. Thus, $\min_{\theta_\star \in \argmin f}\norm{\theta_\star}_\infty = \frac{\log(d)}{2}$. 

Next, we find the $c$ that minimizes $\norm{\theta_\star}_2$ for $\theta_\star \in \argmin f$. Define $\gamma: \R \to \R$ as follows,
\begin{equation*}
    \gamma(c) = \sum_{k = 1}^d (-\log(k) + c)^2 = \norm{\theta_\star}_2^2. 
\end{equation*}
Then,
\begin{align*}
    \gamma'(c) = 2\sum_{k=1}^d (c - \log(k)), \ \text{and} \quad \gamma''(c) = 2d. 
\end{align*}
Solving for $\gamma'(c) = 0$, we find that the optimal solution $c = \frac{1}{d}\sum_{k=1}^d \log(k)$. Since $\gamma''(c) > 0$, we indeed confirm that it is a minimum. Therefore, 
\begin{align*}
    \min_{\theta_\star \in \argmin f} \norm{\theta_\star}_2 = \sqrt{\sum_{k=1}^d \left(- \log(k) + \frac{1}{d}\sum_{j=1}^d \log(j)\right)^2} = \sqrt{d\Var_{k \sim \Unif[d]}[\log(k)]}
\end{align*}

\subsection{Proof of \Cref{main:unigram_norm_lemma}}
Let $V_d = \Var_{k \sim \Unif[d]}[\log(k)]$. Now, observe that,
\begin{align*}
    V_d = \frac{1}{d}\sum_{k=1}^d\log(k)^2 - \left(\frac{1}{d}\sum_{k=1}^d \log(k) \right)^2. 
\end{align*}
Let,
\begin{align*}
    A = \sum_{k=1}^d\log(k)^2=\sum_{k=2}^d\log(k)^2, \ \text{and} \quad B = \sum_{k=1}^d \log(k)=\sum_{k=2}^d \log(k).
\end{align*}
For increasing function $g: \R \to \R$, we have that $\int_1^d g(x) \leq \sum_{k=2}^d g(k) \leq \int_2^{d+1} g(k)$. Then note,
\begin{align*}
    \int \log(x)dx = x\log(x) - x, \ \text{and }\quad   \int \log(x)^2dx = x[\log(x)^2 - 2\log(x) + 2]. 
\end{align*}
Now, we compute upper and lower bounds on $A_l \leq A \leq A_u$ and $B_l \leq B \leq B_u$:
\begin{align*}
A_l &= d[\log(d)^2 - 2\log(d) + 2]-2 \\
    A_u &= (d+1)[\log(d+1)^2 - 2\log(d+1) + 2] - 2[\log(2)^2 - 2\log(2) + 2]\\
    B_l &= d\log(d) - d + 1 \\
    B_u &= (d+1)\log(d+1) - (d+1) - (2\log(2) - 2) 
\end{align*}

\begin{align*}
    V_d &\leq \frac{1}{d} A_u -\left(\frac{1}{d} B_l\right)^2\\
    &\leq \frac{d+1}{d} [\log(d+1)^2 - 2\log(d+1) + 2]  - (\log(d)-1)^2\\
    &=\log(d+1)^2-\log(d)^2 - 2\log(d+1)+2\log(d)+2-1 + \frac{1}{d}[\log(d+1)^2 - 2\log(d+1) + 2]\\
    &\leq \log(d+1)^2-\log(d)^2+1+\frac{1}{d} \log(d+1)^2\\
    &=[\log(d+1)+\log(d)][\log(d+1)-\log(d)]+1+\frac{1}{d} \log(d+1)^2\\
    &\leq 2\log(d+1)\log(1+\frac{1}{d})+1+\frac{1}{d} \log(d+1)^2\\
    &\leq 2*d*\frac{1}{d} +1+\frac{1}{d} \log(d+1)^2\\
    &\leq 3+2=5
\end{align*}
The last inequality is because $\max_{d>0} \frac{1}{d} \log(d+1)^2 =2$. 

\begin{align*}
    V_d &\geq \frac{1}{d} A_l- (\frac{1}{d}B_u)^2\\
    &\geq \log(d)^2-2\log(d)+2-\frac{2}{d} - \frac{1}{d^2}[(d+1)\log(d+1) - (d+1) - (2\log(2) - 2)]^2\\
    &\geq \log(2)^2-2\log(2)+2-1-\frac{1}{4}[3\log3-3-2\log2+2]^2 > 0
\end{align*}
when $d\geq 2$ because the function is an increasing function of $d$ when $d\geq 1$. 

\subsection{Proof of \Cref{main:unigram_adap}}
% Now, let's compute the lower bound on $L_f$. Let $\sigma(\theta^{(k)})$ be the vector where the $k$-th and $k+1$-th entries are $\frac{1}{2}$ and every other entry is $0$. Then $\nabla^2 f(\theta^{(k)}) = \diag(\sigma(\theta^{(k)})) - \sigma(\theta^{(k)})^\top \sigma(\theta^{(k)})$. So, $\nabla^2 f(\theta^{(k)}) \preceq \diag(\sigma(\theta^{(k)}))$. This must hold for every $k$ and the matrix with the smallest trace that dominates the Hessian for every $\theta^{(k)}$ is $\frac{1}{2}\mI_d$.  Therefore,$L_f = \frac{1}{2}$.  

Consider any $i<j \in [d]$. Let $\sigma(\theta^{(i,j)})$ be the vector where the $i$-th and $j$-th entries are $\frac{1}{2}$ and every other entry is $0$. For a diagonal matrix $\mA=\diag(\alpha_1, \cdots, \alpha_d)$ to dominate $\nabla^2 f(\theta^{(i,j)})=\diag(\sigma(\theta^{(i,j)}))-\sigma(\theta^{(i,j)})^\top \sigma(\theta^{(i,j)})$, it must hold that 
\begin{equation*}
    \alpha_i x_i^2+\alpha_j x_j^2 \geq \frac{1}{4} x_i^2-\frac{1}{2} x_i x_j + \frac{1}{4} x_j^2
\end{equation*}
for any $x_i, x_j \in \mathbb{R}$. It is equivalent to $(\alpha_i-1/4)(\alpha_j -1/4)\geq 1/16$. For any $\alpha_i$ and $\alpha_j$ satisfying this constraint, it always hold that 
\begin{align*}
    \alpha_i + \alpha_j = \frac{1}{2} + (\alpha_i-1/4)+(\alpha_j -1/4)\geq \frac{1}{2}+2\sqrt{(\alpha_i-1/4)(\alpha_j -1/4)}\geq \frac{1}{2} + \frac{1}{2}=1. 
\end{align*}
Furthermore, there is a lower bound of $\tr(\mA)$ as following
\begin{align*}
    \tr(\mA)&=\sum_{i=1}^d \alpha_i =\frac{1}{d-1} \sum_{i < j}(\alpha_i+\alpha_j) \geq \frac{1}{d-1} \frac{d(d-1)}{2}=d/2
\end{align*}

\section{Additive Logistic Transformation Unigram Model}\label{sec:additive}
In this section, we consider a slightly modified version of the problem presented in \Cref{main:unigram} that has a unique solution. Specifically, $\tilde{f}: \R^{d-1} \to \R^d$,
\begin{align}\label{app:alt_unigram}
    \tilde{f}(\theta) = \KL\infdivx{p}{\tilde{\sigma}(\theta)},
\end{align}
where $\tilde{\sigma}$ is the additive logistic transformation. The additive logistic transformation is equivalent to computing the softmax over $[\theta_1 \ \theta_2 \ \cdots \ 0]$, so it has a unique inverse. This is beneficial because it allows us to simplify the analysis of $\tilde{f}$. We first start by providing lower bounds on the smoothness of $\tilde{f}$.

\begin{lemma}\label{app:alt_smoothness}
Consider $\tilde{f}$ in \Cref{app:alt_unigram}. Then,
\begin{equation*}
    \frac{1}{2} \leq L_{\norm{\cdot}_2}(\tilde{f}) \leq 1, \ \text{and} \quad  L_{\norm{\cdot}_\infty}(\tilde{f}) = 1. 
\end{equation*}
In particular, the smoothness constants of $f$ in \Cref{main:unigram} and $\tilde{f}$ are the same. 
\end{lemma}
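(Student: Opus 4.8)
The plan is to exploit the fact that $\tilde{f}$ is nothing more than $f$ from \Cref{main:unigram} restricted to the affine subspace that pins the last logit to zero. Concretely, I would introduce the linear embedding $\iota:\R^{d-1}\to\R^d$ given by $\iota(\theta)=[\theta_1\ \cdots\ \theta_{d-1}\ 0]$, so that $\tilde{\sigma}(\theta)=\sigma(\iota(\theta))$ and $\tilde{f}(\theta)=f(\iota(\theta))$. Writing $s:=\tilde{\sigma}(\theta)\in\R^d$ and applying the chain rule to the linear map $\iota$ (whose Jacobian is $\mI_{d-1}$ stacked over a zero row), I would show that $\nabla^2\tilde{f}(\theta)$ equals the leading $(d-1)\times(d-1)$ principal submatrix of $\nabla^2 f(\iota(\theta))=\diag(s)-ss^\top$, namely $\diag(s_{1:d-1})-s_{1:d-1}s_{1:d-1}^\top$. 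This identity is the engine for everything that follows.

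For the upper bounds, given any $u\in\R^{d-1}$ I would lift it to $\tilde{u}=[u\ 0]\in\R^d$, which satisfies $\norm{\tilde{u}}_2=\norm{u}_2$, $\norm{\tilde{u}}_\infty=\norm{u}_\infty$, and $u^\top\nabla^2\tilde{f}(\theta)u=\tilde{u}^\top\nabla^2 f(\iota(\theta))\tilde{u}$. The two upper-bound computations in the proof of \Cref{main:lemma_unigram_1} then apply verbatim: $\nabla^2 f(\iota(\theta))\preceq\diag(s)\preceq\mI_d$ gives $L_{\norm{\cdot}_2}(\tilde{f})\le 1$, and the Cauchy--Schwarz argument together with $\norm{s_{1:d-1}}_1\le\norm{s}_1=1$ gives $L_{\norm{\cdot}_\infty}(\tilde{f})\le 1$. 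Since passing to a principal submatrix can only shrink the quadratic form, these bounds are in fact immediate consequences of the corresponding bounds for $f$.

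For the lower bounds I would reuse the path $\theta(t)=[t\ t\ -t\ \cdots\ -t]\in\R^{d-1}$ and set $s(t):=\tilde{\sigma}(\theta(t))_1=\tilde{\sigma}(\theta(t))_2=\frac{e^t}{2e^t+(d-3)e^{-t}+1}$, where the trailing $+1$ accounts for the pinned coordinate $d$. The crucial point is that $\lim_{t\to\infty}s(t)=\tfrac12$ exactly as before, so the relevant $2\times 2$ block of $\nabla^2\tilde{f}(\theta(t))$ at coordinates $1,2$ coincides in form with the matrix $\mA(t)$ analyzed earlier. Feeding $u=[r_1\ r_2\ 0\ \cdots\ 0]$ with $r\in\R^2$ through the same eigenvalue computation (largest eigenvalue $s(t)$ under $\norm{r}_2=1$, and value $2s(t)$ at $\bar{r}=[1,-1]$ under $\norm{r}_\infty=1$) yields $L_{\norm{\cdot}_2}(\tilde{f})\ge\tfrac12$ and $L_{\norm{\cdot}_\infty}(\tilde{f})\ge 1$ in the limit $t\to\infty$.

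There is no serious obstacle here; the argument is essentially a verification that the gauge-fixing reduction preserves the relevant structure. The only points demanding care are (i) confirming via the chain rule that the Hessian is exactly the principal submatrix rather than a rescaled version, and (ii) checking that the limiting softmax mass $s(t)\to\tfrac12$ survives the pinned last coordinate---which it does, since the single fixed entry contributes only a constant $1$ to the denominator that is negligible against $2e^t$ as $t\to\infty$. Combining the matching upper and lower bounds gives $L_{\norm{\cdot}_2}(\tilde{f})\in[\tfrac12,1]$ and $L_{\norm{\cdot}_\infty}(\tilde{f})=1$, and since these coincide with the constants in \Cref{main:lemma_unigram_1}, the final ``in particular'' claim follows at once.
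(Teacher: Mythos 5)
Your proposal is correct and follows essentially the same route as the paper: compute $\nabla^2\tilde{f}(\theta)$ as the leading $(d-1)\times(d-1)$ principal submatrix $\diag(\tilde{\sigma}(\theta)_{1:d-1})-\tilde{\sigma}(\theta)_{1:d-1}\tilde{\sigma}(\theta)_{1:d-1}^\top$ of the Hessian of $f$, then transfer the upper- and lower-bound arguments from \Cref{main:lemma_unigram_1}. In fact you are slightly more explicit than the paper (which simply says ``apply the same reasoning''), since you verify that the pinned coordinate only adds a constant $1$ to the denominator of $s(t)$ and hence does not disturb the limit $s(t)\to\tfrac12$.
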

\begin{proof}
    Expand $\tilde{f}$ as,
    \begin{equation*}
        \tilde{f}(\theta) = \sum_{k=1}^{d-1}\left[-p_k\theta_k + p_k\log\left(1 + \sum_{j=1}^{d-1}e^{\theta_j}\right) \right] + p_d\log\left(1 + \sum_{j=1}^{d-1} e^{\theta_j} \right).
    \end{equation*}
    Then,
    \begin{equation*}
        \frac{\partial \tilde{f}}{\partial \theta_i} = -p_i + \sum_{k=1}^{d-1}p_k\tilde{\sigma}(\theta)_i + p_d\tilde{\sigma}(\theta)_i = \tilde{\sigma}(\theta)_i - p_i. 
    \end{equation*}
    Therefore, $\nabla \tilde{f}(\theta) = \tilde{\sigma}(\theta)_{1:d-1} - p_{1:d-1}$. Therefore, the Hessian of $\tilde{f}$ is the Jacobian of $\tilde{\sigma}$, which is just the $d-1 \times d-1$ sublock of the Hessian of $f$. So, 
    \begin{equation*}
        \nabla^2 \tilde{f}(\theta) = \diag(\tilde{\sigma}(\theta)_{1:d-1}) - \tilde{\sigma}(\theta)_{1:d-1}\tilde{\sigma}(\theta)_{1:d-1}^\top.
    \end{equation*}
    To compute the smoothness constants of $\tilde{f}$, we can apply the same reasoning we did to compute the upper and lower bounds for the smoothness constants of $f$.  
\end{proof}
In the next lemma, we provide the norm of the optimal solution of $\tilde{f}$.
\begin{lemma}\label{app:alt_norm_lemma}
    For the optimal set of $\theta_\star$ of $\tilde{f}$, we have that,
    \begin{align*}
        \min_{\theta_\star \in \argmin \tilde{f}} \norm{\theta_\star}_\infty = \log(d), \ \text{and} \quad  \min_{\theta_\star \in \argmin f} \norm{\theta_\star}_2 = \sqrt{\sum_{k=1}^{d-1} \log\left(\frac{k}{d}\right)^2}.
    \end{align*}
\end{lemma}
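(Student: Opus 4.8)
The plan is to exploit the fact that the additive logistic transformation $\tilde{\sigma}$ is a bijection, so that (unlike the softmax model) $\tilde{f}$ attains its minimum at a \emph{unique} point $\theta_\star$, characterized by $\tilde{\sigma}(\theta_\star) = p$. Consequently the minimization over $\argmin \tilde{f}$ in the statement is a minimization over a singleton, and both norms reduce to direct evaluations at $\theta_\star$.

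First I would invert $\tilde{\sigma}$. Since $\tilde{\sigma}$ is the softmax over the padded vector $[\theta_1 \ \cdots \ \theta_{d-1} \ 0]$, the ratio of the $k$-th output to the anchored $d$-th output is exactly $e^{\theta_k}$. Setting the output equal to $p$ gives $e^{\theta_{\star,k}} = p_k/p_d$, hence
\begin{equation*}
    \theta_{\star,k} = \log\!\left(\frac{p_k}{p_d}\right), \qquad k \in [d-1].
\end{equation*}
Under \Cref{main:ht_ass} we have $p_k \propto 1/k$, so $p_k/p_d = d/k$ and therefore $\theta_{\star,k} = \log(d/k) = -\log(k/d)$.

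For the $\ell_\infty$ norm I would observe that the entries $\log(d/k)$ are strictly positive and strictly decreasing in $k$ over $[d-1]$, so the entry of largest absolute value is the first one, giving $\norm{\theta_\star}_\infty = \log(d/1) = \log(d)$. For the $\ell_2$ norm I would simply sum the squared entries, $\norm{\theta_\star}_2 = \sqrt{\sum_{k=1}^{d-1}\log(d/k)^2}$; since squaring removes the sign, $\log(d/k)^2 = \log(k/d)^2$, which matches the claimed expression $\sqrt{\sum_{k=1}^{d-1}\log(k/d)^2}$.

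The main point to emphasize is that there is no genuine obstacle here, in sharp contrast to \Cref{main:lemma_unigram_2} for the softmax model: there the shift-invariance of the softmax forced an extra optimization over a free constant $c$ (the midpoint for $\ell_\infty$, the mean for $\ell_2$), whereas the invertibility of $\tilde{\sigma}$ pins down the minimizer uniquely and eliminates that step. The only item requiring care is confirming the sign and monotonicity of the coordinates $\log(d/k)$ so that the $\ell_\infty$ maximizer is correctly identified as the first coordinate; everything else is immediate.
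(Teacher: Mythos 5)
Your proposal is correct and follows essentially the same route as the paper: invert the additive logistic transformation to get the unique minimizer $\theta_{\star,k}=\log(p_k/p_d)=\log(d/k)$ and then evaluate the two norms directly. The only difference is cosmetic — you spell out the sign/monotonicity check for the $\ell_\infty$ maximizer that the paper leaves as ``not too hard to see'' (and in doing so you implicitly fix the paper's harmless sign slip, which writes $\log(k/d)$ where the inversion formula gives $\log(d/k)$; the norms are unaffected).
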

\begin{proof}
    Suppose $p \in \R^d$ is a probability vector \textit{i.e.} $\sum_i p_i = 1$. Then the inverse of the additive logistic transformation is given by,
    \begin{equation*}
        \tilde{\sigma}(p)^{-1} = \left[\log\left(\frac{p_1}{p_d}\right) \  \log\left(\frac{p_2}{p_d}\right) \ \cdots \  \log\left(\frac{p_{d-1}}{p_d}\right)\right]
    \end{equation*}
    The minimizer of $\tilde{f}$ must satisfy $\tilde{\sigma}(\theta_\star) = p$. Thus, $\tilde{f}$ has a unique minimizer. Since $p_k = \frac{k^{-1}}{\sum_i^d i^{-1}}$, 
    \begin{equation*}
        \theta_\star = \left[\log\left(\frac{1}{d}\right) \  \log\left(\frac{2}{d}\right) \ \cdots \  \log\left(\frac{d-1}{d}\right)\right].
    \end{equation*}
    Then, it's not too hard to see that $\norm{\theta_\star}_\infty = \log(d)$ and $\norm{\theta_\star}_2 = \sqrt{\sum_{k=1}^{d-1} \log\left(\frac{k}{d}\right)^2}$. 
\end{proof}
The following theorem characterizes the complexity of $\tilde{f}$ under $\ell_2$ and $\ell_\infty$ norms. 
\begin{theorem}\label{app:alt_complexity}
    For large $d$, we have that,
    \begin{equation*}
        \complexity{\tilde{f}}{\norm{\cdot}_\infty} = 8\log(d)^2 \ll 4\sum_{k=1}^{d-1} \log\left(\frac{k}{d}\right)^2\leq \complexity{\tilde{f}}{\norm{\cdot}_2}.
    \end{equation*}
\end{theorem}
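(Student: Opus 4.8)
The plan is to reduce everything to the complexity formula $\complexity{g}{\|\cdot\|} = 8L_{\|\cdot\|}(g)\min_{\theta_\star \in \argmin g}\|\theta_\star\|^2$ from \Cref{main:convergence_rate} and then substitute the two preceding lemmas. For the $\ell_\infty$ term, \Cref{app:alt_smoothness} gives $L_{\norm{\cdot}_\infty}(\tilde{f}) = 1$ and \Cref{app:alt_norm_lemma} gives $\min_{\theta_\star}\norm{\theta_\star}_\infty = \log(d)$, so $\complexity{\tilde{f}}{\norm{\cdot}_\infty} = 8(\log d)^2$ exactly, which is the leftmost equality. For the rightmost inequality I combine the lower bound $L_{\norm{\cdot}_2}(\tilde{f}) \geq \frac{1}{2}$ from \Cref{app:alt_smoothness} with $\min_{\theta_\star}\norm{\theta_\star}_2^2 = \sum_{k=1}^{d-1}\log(k/d)^2$ from \Cref{app:alt_norm_lemma}, obtaining $\complexity{\tilde{f}}{\norm{\cdot}_2} = 8 L_{\norm{\cdot}_2}(\tilde{f})\sum_{k=1}^{d-1}\log(k/d)^2 \geq 4\sum_{k=1}^{d-1}\log(k/d)^2$. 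These two steps are immediate bookkeeping from the already-proven lemmas.

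The real content is the middle relation $8(\log d)^2 \ll 4\sum_{k=1}^{d-1}\log(k/d)^2$, for which I must show the sum grows linearly in $d$. The plan is to estimate $S_d := \sum_{k=1}^{d-1}\log(k/d)^2 = \sum_{k=1}^{d}(\log k - \log d)^2$ (the $k=d$ term vanishes) by the same integral-comparison tactic used in \Cref{main:unigram_norm_lemma}. Using the antiderivative $\int \log(x)^2\,dx = x[\log(x)^2 - 2\log x + 2]$ already recorded there, the change of variables $u = x/d$ converts $\int_1^d (\log(x/d))^2\,dx$ into $d\int_{1/d}^{1}(\log u)^2\,du$, and since $\int_0^1(\log u)^2\,du = 2$ this yields $S_d \sim 2d$. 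For the theorem I only need the lower bound $S_d \geq c\,d$ for some absolute constant $c > 0$, which follows from the integral lower bound $S_d \geq \int_1^d (\log(x/d))^2\,dx = d\int_{1/d}^1 (\log u)^2\,du$, a quantity that converges to $2d$ and is therefore at least $d$ for all large $d$.

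Having established $S_d = \Theta(d)$, the middle relation follows because $8(\log d)^2 = o(d)$ while $4 S_d = \Theta(d)$, so the left-hand side is asymptotically negligible against the right-hand side, which is exactly the meaning of $\ll$. Chaining the leftmost equality, the middle relation, and the rightmost inequality then gives the displayed statement.

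The step I expect to be the main obstacle is the linear lower bound on $S_d$: one must check that the few small-$k$ terms (where $\log(k/d)^2$ is as large as $(\log d)^2$) do not corrupt the integral comparison. Since there are only $O(1)$ such dominant terms while the bulk of the sum, over $k$ comparable to $d$, already contributes a constant multiple of $d$, the monotone integral bound is clean and no term is mishandled. Because the relevant antiderivative is available verbatim from \Cref{main:unigram_norm_lemma}, the only genuinely new computation is the single change of variables $u = x/d$.
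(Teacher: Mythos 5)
Your proposal is correct and follows essentially the same route as the paper: the paper's proof is a one-line substitution of \Cref{app:alt_smoothness} and \Cref{app:alt_norm_lemma} into the complexity formula, exactly your first paragraph. The only difference is that you additionally spell out the integral-comparison estimate $\sum_{k=1}^{d-1}\log(k/d)^2 = \Theta(d)$ justifying the middle $\ll$, which the paper leaves implicit (by analogy with \Cref{main:unigram_norm_lemma}); your version of that estimate is sound, since $x\mapsto \log(x/d)^2$ is decreasing on $[1,d]$ so the sum dominates $\int_1^d \log(x/d)^2\,dx = d\int_{1/d}^1(\log u)^2\,du \to 2d$.
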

\begin{proof}
    Use the fact that $L_{\norm{\cdot}_\infty}(\tilde{f}) = 1$ and $\frac{1}{2} \leq L_{\norm{\cdot}_2}(\tilde{f})$ together with the norms computed in \Cref{app:alt_norm_lemma}. 
\end{proof}
Now that we know the complexity of $\tilde{f}$, we can compare the upper bound on the convergence rate for sign descent with weight decay and normalized GD with weight decay. 

\begin{corollary}\label{app:corollary}
    Consider optimizing $\tilde{f}$ in \Cref{app:alt_unigram} with large $d$ and initialized at $\theta_0 = 0$. Then the iterates of Sign descent with weight decay $\lambda_\infty = \frac{1}{\log(d)}$ and learning rate $\eta_t = \frac{1}{\lambda_\infty(t+1)}$ satify,
    \begin{align*}
        f(x_T) - f^\star \leq \frac{8\log(d)^2}{T + 1}.
    \end{align*}
    Furthermore, the iterates of normalized GD with weight decay $\lambda_2 = \frac{1}{\sqrt{\sum_{k=1}^{d-1} \log\left(\frac{k}{d}\right)^2}}$ and learning rate $\eta_t = \frac{1}{\lambda_2(t+1)}$ satisfy,
    \begin{align*}
        f(x_T) - f^\star \leq \frac{4}{T + 1}\sum_{k=1}^{d-1} \log\left(\frac{k}{d}\right)^2.
    \end{align*}
\end{corollary}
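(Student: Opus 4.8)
The plan is to obtain both inequalities as immediate specializations of \Cref{main:convergence_rate}, because every problem-dependent ingredient that theorem needs has already been computed for $\tilde{f}$: the smoothness constants come from \Cref{app:alt_smoothness}, the minimal-norm solutions from \Cref{app:alt_norm_lemma}, and their complexities $\complexity{\tilde{f}}{\norm{\cdot}_\infty}$ and $\complexity{\tilde{f}}{\norm{\cdot}_2}$ from \Cref{app:alt_complexity}. A convenient feature of this variant is that $\tilde{f}$ has a \emph{unique} minimizer, so $\min_{\theta_\star \in \argmin \tilde{f}}\norm{\theta_\star}$ is just the norm of that single point and there is no set to optimize over. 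The proof therefore reduces to verifying that the prescribed weight decay and step-size schedules are exactly the ones required by \Cref{main:convergence_rate} and then substituting the numerical values.

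For sign descent I would check that $\lambda_\infty = 1/\log(d)$ equals $1/\min_{\theta_\star}\norm{\theta_\star}_\infty$ by \Cref{app:alt_norm_lemma}, so the optimal-$\lambda$ clause of \Cref{main:convergence_rate} applies and the admissibility condition $\lambda \le 1/\norm{\theta_\star}$ holds with equality. Starting from $\theta_0 = 0$ gives $B = \max\{1/\lambda_\infty,\norm{\theta_0}\} = 1/\lambda_\infty$, so $B\lambda_\infty = 1$ and $(1+B\lambda)^2 = 4$; together with $L_{\norm{\cdot}_\infty}(\tilde{f}) = 1$ this yields $\complexity{\tilde{f}}{\norm{\cdot}_\infty} = 8\log(d)^2$ and hence the suboptimality bound $8\log(d)^2/(T+2) \le 8\log(d)^2/(T+1)$. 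The normalized-GD case is structurally identical: $\lambda_2 = 1/\sqrt{\sum_{k=1}^{d-1}\log(k/d)^2} = 1/\min_{\theta_\star}\norm{\theta_\star}_2$, again $\theta_0 = 0$ gives $B\lambda_2 = 1$, and substituting into $\complexity{\tilde{f}}{\norm{\cdot}_2} = 8 L_{\norm{\cdot}_2}(\tilde{f})\,\min_{\theta_\star}\norm{\theta_\star}_2^2$ produces a bound of order $\tfrac{1}{T+1}\sum_{k=1}^{d-1}\log(k/d)^2$.

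The one step that is not pure bookkeeping — and hence the place to be careful — is the constant multiplying $\sum_{k=1}^{d-1}\log(k/d)^2$ in the normalized-GD bound. That constant is $8L_{\norm{\cdot}_2}(\tilde{f})$, and \Cref{app:alt_smoothness} only brackets $L_{\norm{\cdot}_2}(\tilde{f})$ in $[\tfrac12,1]$ rather than fixing it; a valid \emph{upper} bound on the suboptimality must use the upper end $L_{\norm{\cdot}_2}(\tilde{f}) \le 1$, so I would make explicit which endpoint is invoked and reconcile it with the stated coefficient $4$ (which corresponds to taking $L_{\norm{\cdot}_2}(\tilde{f}) = \tfrac12$). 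Regardless of this $O(1)$ factor, the separation the corollary is really asserting is preserved: the sign-descent guarantee scales as $\Theta(\log^2 d)$ while the normalized-GD guarantee scales as $\sum_{k=1}^{d-1}\log(k/d)^2 = \Theta(d)$, so the qualitative conclusion — that sign descent converges far faster on $\tilde{f}$ for large $d$ — is robust to the exact value of the smoothness constant.
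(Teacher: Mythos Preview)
Your proposal is correct and matches the paper's own (implicit) derivation: the corollary is simply \Cref{main:convergence_rate} specialized to $\tilde f$ using the smoothness constants from \Cref{app:alt_smoothness} and the minimal-norm solution from \Cref{app:alt_norm_lemma}, exactly as you describe. Your observation about the constant in the normalized-GD bound is also well-taken: a valid upper bound requires the upper endpoint $L_{\norm{\cdot}_2}(\tilde f)\le 1$, which yields a coefficient $8$ rather than the stated $4$; this is an $O(1)$ discrepancy in the paper's statement that does not affect the $\Theta(\log^2 d)$ versus $\Theta(d)$ separation.
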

Our theoretical investigation for $\tilde{f}$ suggests that sign descent with weight decay optimizes much faster normalized GD with weight decay. In \Cref{app:alt_figure} we very the results of \Cref{app:corollary}. 

\begin{figure}[!htbp]
    \centering
    \includegraphics[width = 0.53\textwidth]{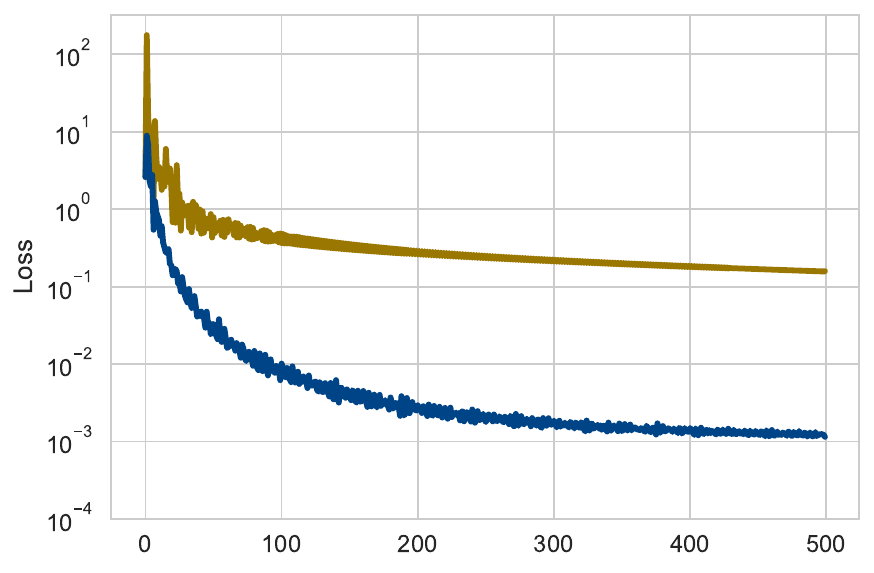}
    ~
    \hspace*{4em}
    \vspace{-4pt}
    \includegraphics[width=0.35\textwidth]{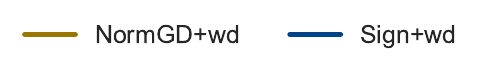}
    \vspace{-4pt}
    \hspace*{1em}
    \caption{Performance of NSD with weight decay when minimizing $f$ with $d = 10^3$. For each optimizer, we set $\lambda = \frac{1}{\min_{\theta_\star \in \argmin f }\norm{\theta_\star}}$ and use a learning rate of $\eta_t = \frac{2}{\lambda(t+1)}$.}
    % \caption{Performance of steepest descent and NSD with weight decay when minimizing the modified problem $\tilde{f}$. Steepest descent methods use a learning rate of $\eta_t = \frac{1}{L}$. For NSD with weight decay, we set $\lambda = \frac{1}{\norm{\theta_\star}}$ and use a learning rate of $\eta_t = \frac{2}{\lambda(t+1)}$. Note we refer to rescaled sign descent as steepest descent w.r.t. $\ell_\infty$ norm.}
    \label{app:alt_figure}
\end{figure}
\end{document}